\documentclass{article}


\usepackage[preprint]{neurips_2026}
\makeatletter
\renewcommand{\@bottomtitlebar}{
  \vskip 0.20in      
  \vskip -\parskip
  \hrule height 1\p@
  \vskip 0.01in      
}
\makeatother

\usepackage{xcolor}
\newif\ifdraft
\drafttrue      

\ifdraft
    \newcommand{\jiangnan}[1]{\textcolor{chestnut}{\textbf{[#1]}}}
    \newcommand{\heng}[1]{{{\textcolor{chestnut}{[Heng: #1]}}}}
    \newcommand{\hq}[1]{\textcolor{orange}{\textbf{[#1]}}}
    \newcommand{\hanqi}[1]{\textcolor{blue}{#1}}
    \newcommand{\zy}[1]{{{\textcolor{purple}{[Zhenyi: #1]}}}}
    \newcommand{\yh}[1]{{{\textcolor{red}{[Yulan: #1]}}}}
    \newcommand{\neu}[1]{\textcolor{blue}{#1}}

\else
    \newcommand{\dc}[1]{}
    \newcommand{\jiangnan}[1]{}
    \newcommand{\heng}[1]{}
    \newcommand{\hq}[1]{}
    \newcommand{\zy}[1]{}
    \newcommand{\yh}[1]{}
    \newcommand{\hanqi}[1]{}
    \newcommand{\neu}[1]{}{}

\fi

\newcommand{\method}{ComprExIT}

\usepackage[utf8]{inputenc}
\usepackage[T1]{fontenc}

\usepackage{microtype}
\usepackage{graphicx}
\usepackage{wrapfig}
\usepackage{needspace}
\usepackage{subcaption}
\usepackage{placeins}
\usepackage{tabularx}
\usepackage{caption}
\usepackage{booktabs}
\usepackage{multirow}
\usepackage{colortbl}
\usepackage[table]{xcolor}
\usepackage{makecell}
\usepackage{nicefrac}

\usepackage{hyperref}
\usepackage{url}

\usepackage{amsmath}
\usepackage{amssymb}
\usepackage{amsfonts}
\usepackage{bm}
\usepackage{mathtools}
\usepackage{amsthm}
\usepackage{enumitem}
\usepackage[most]{tcolorbox}

\usepackage[capitalize,noabbrev]{cleveref}

\theoremstyle{plain}
\newtheorem{theorem}{Theorem}[section]
\newtheorem{proposition}[theorem]{Proposition}

\theoremstyle{definition}

\theoremstyle{remark}

\newtcolorbox{keyquestionbox}{
  enhanced,
  breakable,
  colback=black!3,
  colframe=black!70,
  boxrule=0.6pt,
  arc=2pt,
  left=6pt,
  right=6pt,
  top=4pt,
  bottom=4pt,
  width=\linewidth,
  before skip=0.45em,
  after skip=0.45em,
}

\usepackage[textsize=tiny]{todonotes}
\definecolor{chestnut}{cmyk}{0, 0.7808, 0.4429, 0.1412}

\title{Fix the Structural Bottleneck: Context Compression via Explicit Information Transmission}

\author{%
  Jiangnan Ye \\
  King's College London, UK \\
  \And
  Hanqi Yan\thanks{Corresponding authors: \texttt{hanqi.1.yan@kcl.ac.uk}, \texttt{yulan.he@kcl.ac.uk}.} \\
  King's College London, UK \\
  \And
  Zhenyi Shen \\
  King's College London, UK \\
  \AND
  Heng Chang \\
  Tsinghua University, China \\
  \And
  Ye Mao \\
  Imperial College London, UK \\
  \And
  Yulan He$^{*}$ \\
  King's College London, UK \\
  The Alan Turing Institute, UK \\
}

\begin{document}
\hypersetup{hidelinks}
\maketitle

\vspace{-2em}
\begin{abstract}
\vspace{-3pt}
Long-context LLM agents often struggle with growing token, memory, and latency costs, making efficient context compression essential for practical deployment. Existing LLM-as-a-compressor methods remain noticeably inferior to using the full context. We find that this gap partly stems from their inability to preserve contextual information effectively. In this work, we revisit context compression from a structural perspective and identify two key bottlenecks in standard LLM-based compressors: limited coordination among compression tokens during information aggregation, and layerwise dilution that weakens useful signals from intermediate hidden states. To address these limitations, we propose \textbf{\method{}} (Context \textbf{Compr}ession
via \textbf{Ex}plicit \textbf{I}nformation \textbf{T}ransmission), a new context compression framework based on explicit information transmission. \method{} adaptively aggregates features into anchors across frozen LLM layers, then allocates information from anchors to compression slots through a globally coordinated transport plan. Experiments on 12 datasets show that ComprExIT consistently outperforms strong soft-compression baselines, improving average F1 by up to 18.5\%, while adding only $\sim$1\% trainable parameters and achieving over 2$\times$ faster compression than the fastest baselines. Code is available at \url{https://github.com/Jiangnan0522/ComprExIT}.

\end{abstract}

\vspace{-1em}
\section{Introduction} 
\vspace{-0.5em}
\label{section:intro}
\begin{wrapfigure}{r}{0.45\textwidth}
    \vspace{-1.4em}
    \centering
    \includegraphics[width=\linewidth]{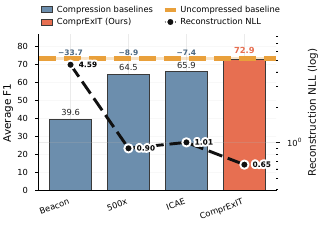}
    \caption{Performance (Average F1) of existing context compression methods and their ability to preserve contextual information (reconstruction negative log-likelihood)}
    \label{fig:intro}
    \vspace{-1em}
\end{wrapfigure}

Large Language Models (LLMs) have become unprecedentedly powerful, and these capabilities come with ever-growing context lengths~\citep{team2026kimi, team2026qwen3, deepseekai2026deepseekv4}. Long contexts rapidly consume tokens, saturate the effective context window, and enlarge the KV cache, incurring substantial latency and memory overhead~\citep{liu2025comprehensive, xiao2025duoattention}. These costs can ultimately limit the practical performance of LLMs on downstream tasks~\citep{li2025longcontext, liu-etal-2024-lost}. 

One emerging technique for addressing these problems is \textit{Context Compression} (CC)~\citep{li-etal-2025-prompt}, which compresses the original context into a fraction of its length. Current CC methods largely follow the LLM-as-a-compressor paradigm: they repurpose an LLM as a compressor by modifying its internal computation through continual training~\citep{mu2023learning, ge2024incontext, li-etal-2025-500xcompressor, zhang2025long, deng2025unigist}. Specifically, as shown in \cref{fig:framework}-left, context tokens and a small set of gist tokens are fed into the LLM, which is trained to encode contextual information into these gist tokens.
Although many design choices have been explored within this paradigm, existing methods still fall well short of the uncompressed baseline (\cref{fig:intro} bars). We hypothesize that a key reason is their limited ability to retain contextual information during compression. To investigate this, we use the gist tokens produced by different compressors as inputs and measure the decoder LLM's reconstruction negative log-likelihood (NLL) on the original context~\footnote{Experimental details are in Appendix~\ref{appendix:exp-recons}.}, which serves as a proxy for how much information the compressed representation preserves. As shown in \cref{fig:intro} (black line), methods with worse downstream performance (lower F1) also yield higher reconstruction NLL, suggesting that insufficient information retention is a critical bottleneck of existing approaches.




Since the current LLM-as-a-compressor paradigm is built on the standard LLM architecture, this gap in information retention calls for a structural re-examination. By tracing how information flows in the compressor LLM, we identify two limitations. In the \textit{width dimension}, LLMs rely on self-attention to condense information from all context tokens into the gist tokens. Yet this process suffers from a \textbf{Lack of Allocation}: each gist token gathers information independently, without coordinating with the others about which parts of the context they represent. Consequently, multiple gist tokens may focus on the same regions while others remain under-covered, leading to information loss. In the \textit{depth dimension}, LLMs propagate information in a layer-wise accumulative manner. As forward propagation proceeds, fine-grained evidence captured in earlier layers can become increasingly entangled with later, more abstract, generation-oriented representations~\citep{zhang2024comprehensive, skean2025layer}. As a result, early-layer features become harder to recover from the final gist-token states, leading to \textbf{Information Dilution}~\citep{chen2026attnres}.


The analysis above motivates a different formulation: instead of tuning an LLM into a compressor, we treat it as a \textit{frozen} feature extractor and directly use its hidden states for compression. We propose \method{} (Context \textbf{Compr}ession
via \textbf{Ex}plicit \textbf{I}nformation \textbf{T}ransmission), a new context compression framework that decomposes compression into two stages, each addressing one of the limitations above. The \textbf{Widthwise Stage} is built around an optimal-transport-based~\citep{Villani2008OptimalTO} transmission plan that globally allocates information across tokens and explicitly coordinates how contextual information is distributed to compression slots. The \textbf{Depthwise Stage} establishes weighted shortcuts across layers, where fine-grained features extracted at different levels are directly aggregated, thereby mitigating information dilution across layers.

Our experiments across 12 datasets show that \method{} significantly outperforms state-of-the-art CC methods, with an average F1 improvement of 18.5\%. It is also lightweight and efficient, adding only $\sim$1\% parameters while compressing over $2\times$ faster than the fastest baselines. Further analysis shows that our key design components improve information preservation and thereby boost compression performance.

We summarize our contributions as follows:
\begin{itemize}[topsep=0pt]
    \setlength{\itemsep}{0.2em}
    \setlength{\parskip}{0pt}
    \item We analyze why existing LLM-as-a-compressor methods lose contextual information, and identify two key bottlenecks: \textit{lack of allocation} in the width dimension and \textit{information dilution} in the depth dimension.
    \item We propose \method{}, a lightweight context compression framework that treats a frozen LLM as a feature extractor and addresses these bottlenecks with a coordinated widthwise transmission plan and depthwise layer aggregation.
    \item We show that \method{} consistently outperforms prior context compression methods on 12 datasets, while remaining parameter-efficient and faster at compression.
\end{itemize}
\section{Related Work}
\vspace{-0.5em}
\label{section:related-work}
Existing context compression methods can be divided into two categories: token pruning and soft context compression, which operate in discrete and continuous spaces respectively.

\textbf{Token pruning.} Token pruning methods reduce context length by pruning input tokens according to estimated importance. Representative approaches include SelectiveContext~\citep{li-etal-2023-compressing} and LLMLingua-style methods~\citep{jiang-etal-2023-llmlingua, jiang-etal-2024-longllmlingua}, which score tokens/spans using a language model (e.g., via self-information or related salience measures) and retain only the most informative parts of the context. LLMLingua-2~\citep{pan-etal-2024-llmlingua} instead distills a lightweight classifier from a stronger LLM (e.g., GPT-4) to decide which tokens to keep. EFPC~\citep{cao2025efpc} further unifies task-aware and task-agnostic compression within a single framework. Despite their efficiency gains, hard methods remain discrete and lossy, often facing bounded compression ratios and limited expressiveness compared with soft compression in continuous space.

\textbf{Soft context compression.} Soft context compression offers greater flexibility and expressiveness than discrete token removal at high compression ratios, and our method belongs to this category. \citeauthor{wingate-etal-2022-prompt} presented an early attempt to compress context tokens into a single vector, focusing on representation learning. AutoCompressor~\citep{chevalier-etal-2023-adapting} builds LLMs into compressors inspired by the Recurrent Memory Transformer~\citep{bulatov2022recurrent}, accumulatively compressing contexts into summary tokens.~\citeauthor{mu2023learning} introduces gist tokens as an information bottleneck by modifying attention masks, 
establishing the LLM-as-a-compressor paradigm. ICAE~\citep{ge2024incontext} further simplifies this paradigm into an encoder–decoder framework, which has since become a widely adopted paradigm.
Building upon ICAE, 
500×~\citep{li-etal-2025-500xcompressor}, Activation Beacon~\citep{zhang2025long}, and UniGist~\citep{deng2025unigist} pass cached key–value states to the decoder, together with carefully designed gist-token attention masks, to obtain more informative compressed representations. 
\citeauthor{deng-etal-2025-silver} systematically study the effects of different design components within the encoder–decoder compression framework. EPL~\citep{zhao-etal-2025-position} further improves compression performance by adjusting the positional encodings of gist tokens. xRAG~\citep{cheng2024xrag} explores converting representations from text embedding models into compression tokens. SAC~\citep{liu2026autoencodingfree} examines the necessity of the auto-encoding training objective, demonstrating that effective compression can be achieved using only a text completion objective. \citet{tang2026comi} dynamically assign compression budget through coarse-to-fine compression.
Different from the existing methods that use the internal self-attention mechanisms of LLMs to perform compression, our method adopts a fundamentally different paradigm by decoupling compression from the LLM architecture and formulating it as an explicit information transmission problem over frozen hidden states.

\begin{figure*}[!t]
    \centering
    \begin{minipage}[t]{\textwidth}
        \centering
        \includegraphics[width=\linewidth]{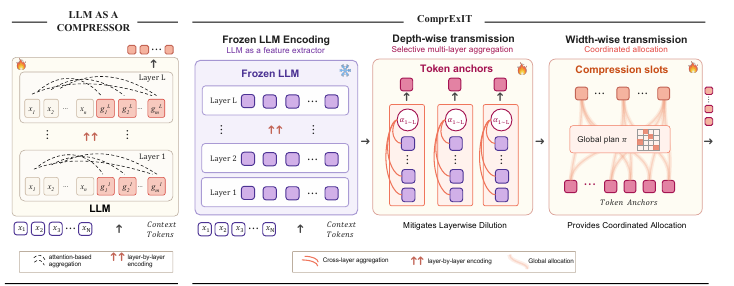}
    \end{minipage}
    \caption{A comparison between existing LLM-as-a-compressor methods (left) and \method{} (right). \textbf{Existing methods} introduce gist tokens that are iteratively encoded across layers by the self-attention in the LLMs. \textbf{\method{}} instead leverages the layerwise hidden states of the context tokens encoded in a forward pass. The hidden states are selectively aggregated into token anchors, which are then transmitted to the compression tokens through a coordinated transmission plan.}
    \label{fig:framework}
\end{figure*}
\vspace{-0.5em}

\vspace{-0.5em}
\section{Preliminary}
\vspace{-0.5em}
\label{sec:info_preservation}

\textbf{Context compression formulation.} 
Context compression consists of a compressor and a decoder. Given a context sequence $\bm{x}=(x_1,\ldots,x_N)\in\mathbb{I}^{N}$, the compressor produces a compact representation $Z=f_\phi(\bm{x})\in\mathbb{R}^{K\times d}$ with $K\ll N$. The decoder then conditions on $Z$ to generate outputs.


\textbf{LLM-as-a-compressor.}
Most existing methods instantiate $f_\phi$ by turning an LLM into a compressor. Given a context sequence $\bm{x}=(x_1,\ldots,x_N)$ and $K$ learnable compression embeddings $G=(\bm{g}_1,\ldots,\bm{g}_K)\in\mathbb{R}^{K\times d}$, often called gist tokens, the model feeds the concatenated hidden-state sequence $[\mathrm{Emb}(\bm{x});G]$ into a Transformer:
\begin{equation}
H^{(0)}=[\mathrm{Emb}(\bm{x});G]\in\mathbb{R}^{(N+K)\times d}, \qquad
H^{(\ell)}=\mathrm{Block}_\ell\bigl(H^{(\ell-1)}\bigr), \quad \ell=1,\ldots,L.
\end{equation}
Here, $H^{(\ell)}$ contains hidden states for all positions in the concatenated sequence. The compressed representation is defined by the compression-token positions in the last-layer output:
\begin{equation}
H^{(L)}=\bigl[H^{(L)}_{\mathrm{ctx}}; H^{(L)}_G\bigr],  \qquad Z = H^{(L)}_G.
\end{equation}
Thus, compression is constrained by the same mechanisms used in LLM computation: self-attention for widthwise aggregation and residual layer propagation for depthwise information transfer.
\vspace{-0.5em}
\section{Limitations and Observations}
\vspace{-0.5em}

\textbf{Widthwise limitation: lack of allocation.}
At $\ell$-th layer, the attention from gist tokens to context tokens can be written as
\begin{equation}
A^{(\ell)} =
\mathrm{softmax}
\left(
\frac{Q^\ell_G(K^\ell_X)^\top}{\sqrt{d_h}}
\right)
\in \mathbb{R}^{K\times N}.
\end{equation}
Each row $A^{(\ell)}_{i,:}$ is normalized independently, meaning that each gist token decides where to attend without considering the choices of the others. As a result, standard self-attention provides no mechanism to allocate how gist tokens absorb information across the context: several tokens may concentrate on the same region, while other regions remain weakly covered. To test whether this happens in practice, we analyze the attention distributions of gist tokens during compression~\footnote{Experiemnt specifications for examining the allocation patterns can be found in Appendix~\ref{appendix:exp-alloc}.}, which reveal where each token gathers information from.
\begin{wrapfigure}{r}{0.58\columnwidth}
    \centering
    \vspace{-0.8em}
    \begin{subfigure}[t]{0.48\linewidth}
        \centering
        \includegraphics[width=\linewidth]{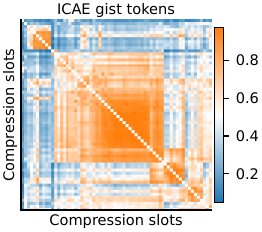}
    \end{subfigure}\hfill
    \begin{subfigure}[t]{0.48\linewidth}
        \centering
        \includegraphics[width=\linewidth]{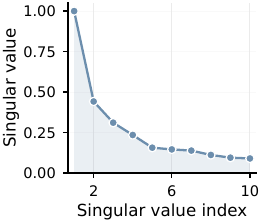}
    \end{subfigure}
    \caption{Two views of gist-token aggregation patterns in ICAE: Pearson correlation between gist-token attention distributions and the singular-value spectrum of the attention aggregation matrix.}
    \label{fig:observations_allocation}
    \vspace{-1.3em}
\end{wrapfigure}
As shown in \cref{fig:observations_allocation}, the attention distributions of different gist tokens are highly correlated, indicating that they often draw information from similar parts of the context. Moreover, the aggregation matrix exhibits a steep singular-value decay, which points to low diversity in the collection behaviour. Together, these findings suggest that many gist tokens capture redundant rather than complementary content. We therefore attribute this pattern to a lack of allocation in self-attention: without explicit coordination, gist tokens cannot spread out to cover distinct context regions and instead collapse onto the same salient areas.

\vspace{-0.5em}
\textbf{Depthwise limitation: layerwise dilution.} 
Let $G^{(\ell)}=H^{(\ell)}_G$ denote the compression-token states at layer $\ell$. In most LLM-as-a-compressor methods, the decoder only receives the final representation $Z=G^{(L)}$. This design implicitly assumes that task-relevant information gathered at earlier layers is preserved through all subsequent updates. In practice, however, features that are salient at an intermediate layer can be attenuated as the network continues forwarding. To characterize how much usable information remains at depth $\ell$, given the task query $q$ and the target output $y$, we define
\begin{equation}
S_\ell =
\max_{\psi\in\Psi}
\mathrm{Score}
\left(
p_\theta(y\mid q,\psi(G^{(\ell)}))
\right),
\end{equation}
where $\psi$ is a lightweight readout or compression module, and $\mathrm{Score}(\cdot)$ denotes the downstream task metric. If $S_\ell$ reaches its maximum at some intermediate layer and then drops as depth approaches $L$, this indicates a layerwise dilution bottleneck: information that is accessible earlier is not fully retained in the final state. Motivated by this hypothesis, we examine how information evolves along both the width and depth dimensions when an LLM is tuned as a compressor.
\begin{wrapfigure}{r}{0.45\columnwidth}
    \centering
    \vspace{-1.0em}
    \includegraphics[width=\linewidth]{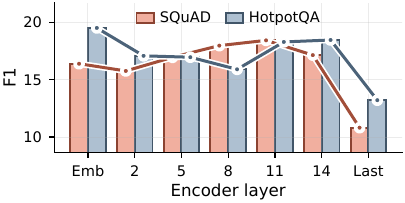}
\caption{Performance of using each LLM layer for compression.}
\label{fig:observations_layers}
\vspace{-0.6em}
\end{wrapfigure}
We train a probing compressor that uses the hidden states from a single chosen LLM layer. The compressor consists of $4\times$ mean pooling followed by a 2-layer MLP. We train the models on SQuAD and HotpotQA and report the decoder F1. As shown in \Cref{fig:observations_layers}, hidden states from many layers provide useful compression features, but performance is not monotonic across layers, and the final layer performs the worst. This supports the view that useful features formed in earlier layers are gradually diluted rather than faithfully preserved through subsequent transformations. Moreover, contributions of layers are task-dependent: early and late layers contribute more to reasoning-intensive QA, whereas middle layers are more effective for extractive QA. These suggest that a strong compression method should both preserve and adaptively exploit features from different layers according to the context and task.

\section{Method}
\label{sec:method}

We present our method in computational order: from depthwise to widthwise computation.

\subsection{Depthwise Transmission to Token Anchors}

We first address the depthwise limitation discussed above, namely the layerwise dilution bottleneck. In LLM-as-a-compressor methods, useful information must remain accessible after repeated layer updates before it reaches the final compression states, even though our preliminary analysis suggests that the accessible utility can peak at intermediate layers. Therefore, we directly read from hidden states across depths and construct a token anchor at each token position by adaptively selecting and aggregating multi-layer features.
Given the hidden states $\{\bm{h}_t^{(\ell)}\}_{\ell=1}^L$ of token $t$, we first compute a lightweight structural context by mixing layer representations:
\begin{equation}
\bm{c}_t = \sum_{\ell=1}^L \pi_\ell \bm{h}_t^{(\ell)}, 
\qquad \sum_{\ell=1}^L \pi_\ell = 1,
\end{equation}
where $\pi_\ell$ denotes learned layer-level structure priors, inspired by Denseformer~\citep{pagliardini2024denseformer}. Conditioned on this context, we perform a token-wise gating attention over layers:
\begin{equation}
\bm{\tilde{h}}_t
=
\sum_{\ell=1}^L
\operatorname{softmax}_{\ell}
\left(
\frac{
\langle W_q \bm{c}_t,\; W_k \bm{h}_t^{(\ell)} + \bm{e}_\ell \rangle
}{\tau}
\right)
W_v \bm{h}_t^{(\ell)} ,
\end{equation}
where $W_q$, $W_k$, and $W_v$ are learnable query, key, and value projection matrices, $\bm{e}_\ell$ is a learnable layer embedding that encodes the layer index, and $\tau$ is the temperature. The resulting $\bm{\tilde{h}}_t$ serves as the token anchor. It can be viewed as a gated multi-layer readout from the frozen LLM to the compression interface, rather than an accumulated state that must survive until the last layer. In this way, features that are useful at intermediate depths remain directly accessible to the compressor, which mitigates the layerwise dilution bottleneck. Moreover, because the gating is conditioned on each token, \method{} can preserve different granularities of information for different tokens and tasks.

\subsection{Widthwise Transmission to Compression Slots}
Given the token anchors aggregated across layers, we next address the widthwise limitation discussed above, namely the lack of allocation among compression tokens. Instead of letting each slot independently retrieve information as in self-attention, we explicitly allocate information from $N$ token anchors to $K$ compression slots through a shared transmission plan. Formally, we seek $\Pi \in \mathbb{R}_{+}^{N \times K}$, where $\Pi_{t,k}$ denotes how much information token anchor $t$ sends to compression slot $k$. 

\textbf{Utility matrix.} To coordinate all possible sender-slot paths globally, we first construct a \emph{utility matrix} $U_{t,k}$ that measures how beneficial it is to transmit information from anchor $t$ to slot $k$. We treat each token anchor as a \emph{sender}. For each compression slot, we build a corresponding \emph{receiver} and assign it a local field of senders so that the initial slot semantics remain aligned with the original token order. Concretely, we uniformly partition the token-anchor sequence into local fields $\mathcal{F}_k$, average the anchors within each field to obtain the receiver representation, and then compute the utility matrix using the cosine similarity between projected sender and receiver representations:
\begin{equation}
\bm{r}_{k} = \frac{1}{|\mathcal{F}_k|} \sum_{t \in \mathcal{F}_k} \bm{\tilde{h}}_t,
\qquad
U_{t,k} = \cos\!\left(W_u\bm{\tilde{h}}_t, W_u \bm{r}_k\right).
\end{equation}
Here $W_u$ is a learnable projection matrix that maps senders and receivers into a shared utility space. Higher utility indicates that sending information along this path is more likely to preserve useful content for the compressed representation.

\textbf{Information capacity.} Different token anchors need not contribute equally to compression. To reflect their varying contextual importance, we assign each sender a learnable information capacity so that less useful anchors can transmit less mass while salient anchors retain more influence. We predict this capacity with a linear layer followed by a softmax over token anchors:
\begin{equation}
\rho_t = \operatorname{softmax}_{t}\!\left(W_\rho \bm{\tilde{h}}_t\right),
\qquad t = 1,\dots,N.
\end{equation}
Here $W_\rho$ is a learnable scoring matrix that produces the sender-capacity logits.

For the receivers, we use a uniform capacity $\rho_k = \frac{1}{K}$. This gives every compression slot the same total budget, while still allowing important anchors to connect to non-local slots to better preserve long-range dependencies.

\textbf{Transmission plan.} Given the utility matrix and the sender/receiver capacities, we derive the transmission plan by solving the following optimization problem:
\begin{equation}
\begin{aligned}
\min_{\Pi \ge 0} \quad 
& \sum_{t=1}^{N} \sum_{k=1}^{K} \Pi_{t,k} \, C_{t,k} \quad
\text{s.t.} \quad 
& \sum_{k=1}^{K} \Pi_{t,k} = \rho_t, \qquad \forall t, \
& \sum_{t=1}^{N} \Pi_{t,k} = \rho_k, \qquad \forall k,
\end{aligned}
\end{equation}
where the cost is defined as $C_{t,k} = 1 - U_{t,k}$. Unlike self-attention, whose weights are normalized independently for each slot, this objective couples all sender-slot assignments through shared marginal constraints. The resulting solution is exactly an optimal transport plan \citep{Villani2008OptimalTO} between senders and receivers, which directly targets the lack-of-allocation issue identified in the preliminary analysis. We solve the problem with entropy regularization, yielding a strictly convex objective that can be optimized efficiently with the Sinkhorn algorithm \citep{sinkhorn1967concerning, cuturi2013sinkhorn}. In practice, for stability and efficiency on long sequences, we run Sinkhorn on fixed-size segments of length $T$. We use a relatively large segment (e.g., $T=128$) to preserve broad allocation flexibility while avoiding overly aggressive long-range assignments that may disrupt local semantic order. This produces a globally coordinated yet locally aware transmission plan.


\textbf{Final representations.}
Given the transmission plan, each compression slot aggregates projected token-anchor features and is then aligned by a lightweight MLP:
\begin{equation}
\bm{z}_k = \mathrm{MLP}\!\left(\sum_{t=1}^{N} \Pi_{t,k} \, W_g \bm{\tilde{h}}_t\right), \qquad k=1,\ldots,K.
\end{equation}
Here $W_g$ is a learnable projection matrix used before slot-wise aggregation. The resulting compressed sequence is denoted by $Z=(\bm{z}_1,\ldots,\bm{z}_K)$.



\section{Experiments and Analysis}

\subsection{Experimental Setup}
\label{section:exp_details}
\textbf{Implementation details.}
In the experiment, we use Llama-3.2-1B-Base and Llama-3.2-3B-Base \citep{grattafiori2024llama3herdmodels} as the base model. For the projection to the decoder's input space, we set the projection hidden size to 256. The base models are trained using BF16 precision. We employ the Sinkhorn algorithm with 30 iterations to approximately solve the entropy-regularized optimal transport problem.

\textbf{Baselines.}
We compare with several soft state-of-the-art compression baselines (1)~\textbf{ICAE}~\citep{ge2024incontext}: a seminal LLM-as-a-compressor method that trains an LLM to encode compressed information into compression tokens. (2)~\textbf{500×}~\citep{li-etal-2025-500xcompressor}: introduces an extra design that passes the KV states of the compression tokens to the decoder (3)~\textbf{Activation Beacon}~\citep{zhang2025long}: introduces the interleaving placement of compression tokens. We also include three non-compression baselines: (4) \textbf{Zero-shot [w/ context]} prompts the LLM with context, serving as an untrained baseline with zero information loss; (5)\textbf{Zero-shot [w/o context]} prompts the LLM without the context, serving as an untrained baseline with complete information loss; and (6)~\textbf{Prompt tuning}~\citep{lester-etal-2021-power} also prepends learnable tokens while retaining full access to the context, and therefore serves as a strong uncompressed baseline in our experiments.

\textbf{Training and evaluation.}
Except for the base models evaluated under the inference-only setup, all baselines are trained under the same two-phase procedure. We adopt a standard two-phase training procedure under a question-unaware setting: next-token prediction (NTP) on 1B tokens sampled from SlimPajama \citep{shen2023slimpajama}, followed by supervised fine-tuning (SFT) on MRQA \citep{fisch-etal-2019-mrqa} with six in-domain and six out-of-domain question-answering datasets. We set the compression ratio to $\times 4$.
All models use a context length of 512 tokens in both phases. We report EM and F1, and study additional settings in the ablations.

\subsection{Main Results}
\begin{table*}[!t]
\centering
\caption{Experimental results on six QA benchmark datasets. Best among compression baselines is \textbf{bold}. Our method is colored in \colorbox{blue!12}{\quad}. The prompt-tuned uncompressed baseline is colored in \colorbox{gray!20}{\quad}.}
\label{tab:main}
\resizebox{\textwidth}{!}{%
\begin{tabular}{lcccccccccccccc}
\toprule
\multirow{2}{*}{\textbf{Methods}} &
\multicolumn{2}{c}{\textbf{SQuAD}} &
\multicolumn{2}{c}{\textbf{NewsQA}} &
\multicolumn{2}{c}{\textbf{TriviaQA}} &
\multicolumn{2}{c}{\textbf{SearchQA}} &
\multicolumn{2}{c}{\textbf{HotpotQA}} &
\multicolumn{2}{c}{\textbf{NQ}} &
\multicolumn{2}{c}{\textbf{Average}} \\
\cmidrule(lr){2-3}\cmidrule(lr){4-5}\cmidrule(lr){6-7}
\cmidrule(lr){8-9}\cmidrule(lr){10-11}\cmidrule(lr){12-13}\cmidrule(lr){14-15}
& \textbf{EM} & \textbf{F1}
& \textbf{EM} & \textbf{F1}
& \textbf{EM} & \textbf{F1}
& \textbf{EM} & \textbf{F1}
& \textbf{EM} & \textbf{F1}
& \textbf{EM} & \textbf{F1}
& \textbf{EM} & \textbf{F1} \\
\midrule

\multicolumn{15}{c}{\itshape Llama-3.2-1B} \\
\midrule

\rowcolor{gray!8}
Prompt tuning [w/ context]
& 71.89 & 81.09
& 30.72 & 45.76
& 61.04 & 66.75
& 64.03 & 70.64
& 53.60 & 69.49
& 53.03 & 66.59
& 55.72 & 66.72 \\

Zero-shot [w/ context]
& 16.59 & 38.76
& 9.97 & 26.99
& 35.50 & 49.31
& 5.83 & 12.39
& 22.51 & 39.24
& 18.14 & 38.90
& 18.09 & 34.27 \\

Zero-shot [w/o context]
& 2.80 & 8.83
& 0.97 & 3.11
& 15.63 & 21.60
& 4.04 & 6.64
& 2.76 & 7.17
& 3.37 & 8.43
& 4.93 & 9.29 \\

\midrule

ICAE \citep{ge2024incontext}
& 36.84 & 50.21
& 20.20 & 33.73
& 57.59 & 63.54
& 67.40 & 74.43
& 40.67 & 57.04
& 42.26 & 58.00
& 44.16 & 56.16 \\

500x \citep{li-etal-2025-500xcompressor}
& 4.65 & 14.02
& 1.85 & 5.27
& 26.09 & 32.66
& 17.44 & 25.59
& 5.56 & 13.44
& 5.58 & 13.89
& 10.20 & 17.48 \\

Beacon \citep{zhang2025long}
& 24.88 & 39.50
& 9.12 & 15.74
& 0.51 & 2.15
& 1.90 & 9.86
& 31.23 & 46.35
& 21.04 & 37.05
& 14.78 & 25.11 \\

\rowcolor{blue!6}
\method{}
& \textbf{51.38} & \textbf{68.08}
& \textbf{30.39} & \textbf{49.12}
& \textbf{64.62} & \textbf{70.93}
& \textbf{71.91} & \textbf{78.93}
& \textbf{49.84} & \textbf{68.40}
& \textbf{45.93} & \textbf{63.88}
& \textbf{52.34} & \textbf{66.55} \\

\midrule
\multicolumn{15}{c}{\itshape Llama-3.2-3B} \\
\midrule
\rowcolor{gray!8}
Prompt tuning [w/ context]
& 80.39 & 88.41
& 35.23 & 50.98
& 71.23 & 76.30
& 72.04 & 78.42
& 55.31 & 72.65
& 59.46 & 73.32
& 62.28 & 73.35 \\

Zero-shot [w/ context]
& 35.85 & 55.05
& 16.33 & 34.89
& 57.52 & 67.83
& 29.40 & 37.26
& 38.94 & 57.24
& 32.74 & 52.09
& 35.13 & 50.73 \\

Zero-shot [w/o context]
& 9.14 & 17.21
& 2.40 & 5.78
& 46.20 & 52.06
& 25.43 & 30.35
& 8.63 & 16.03
& 10.31 & 18.47
& 17.02 & 23.32 \\

\midrule
ICAE \citep{ge2024incontext}
& 48.51 & 62.53
& 27.02 & 43.04
& 68.62 & 74.13
& 76.43 & 82.89
& 47.92 & 66.16
& 50.87 & 66.75
& 53.23 & 65.92 \\

Beacon \citep{zhang2025long}
& 59.81 & 72.65
& 15.46 & 24.39
& 1.36 & 4.72
& 5.24 & 9.13
& 51.55 & 67.41
& 45.05 & 59.56
& 29.75 & 39.64 \\

500x \citep{li-etal-2025-500xcompressor}
& 52.08 & 65.56
& 25.45 & 40.99
& 65.77 & 71.59
& 69.26 & 76.28
& 47.28 & 65.71
& 51.32 & 66.60
& 51.86 & 64.46 \\

\rowcolor{blue!6}
\method{}
& \textbf{59.26} & \textbf{75.68}
& \textbf{35.73} & \textbf{55.23}
& \textbf{72.97 }& \textbf{78.86}
& \textbf{78.00} & \textbf{84.40}
& \textbf{55.92} & \textbf{74.15}
& \textbf{52.23} & \textbf{68.95}
& \textbf{59.02} & \textbf{72.88} \\

\bottomrule
\end{tabular}%
}
\end{table*}

\textbf{Overall performance.}
Under the question-unaware training setup described above, Table \ref{tab:main} shows that \method{} consistently outperforms all compression baselines on the six in-domain QA benchmarks for both Llama-3.2-1B and Llama-3.2-3B. In average F1, it improves over the strongest compression baseline by 18.50\% and 10.56\%, respectively, while remaining close to the uncompressed prompt-tuning baseline (within 0.25\% on 1B and 0.64\% on 3B). Overall, these results show that \method{} performs better under the same compression budget.

\textbf{Out-of-domain performance.}
Using the same training recipe and experimental setup, we further evaluate on six out-of-domain MRQA datasets in \cref{tab:mrqa_ood}. \method{} again performs best overall, improving average F1 over the strongest compression baseline by 30.97\% on Llama-3.2-1B and 14.27\% on Llama-3.2-3B, with only a small gap to ICAE on RelationExtraction. Since RelationExtraction has very short contexts (30 tokens on average) and ICAE and 500x employ a fixed budget of 128 compression tokens, they are constrained by a much lower compression ratio on this dataset, thereby yielding better performance. We also evaluate on summarization, fact-checking, and in-context learning in Appendix~\ref{appendix:exp-more-tasks}. \method{} outperforms the uncompressed baseline on XSum summarization and FEVER fact-checking while remaining competitive on SST-2 in-context learning. Together, these results suggest that \method{} generalizes well to various domains.

\subsection{Ablation Studies}
\definecolor{darkred}{RGB}{200,40,40}
\begin{table*}[t]
\centering
\begin{minipage}[t]{0.58\textwidth}
\footnotesize
\centering
\captionof{table}{Key design ablations for \method{} on Llama-3.2-1B (16 layers), using the main experiment's setup.}
\label{tab:ablation_design}
\renewcommand{\arraystretch}{1.08}
\setlength{\tabcolsep}{2pt}
\begin{tabular}{@{}p{0.47\linewidth}@{\hspace{1pt}}c@{\hspace{16pt}}c@{}}
\toprule
\textbf{Ablation} & \textbf{Avg. EM} & \textbf{Avg. F1} \\
\midrule
\method{} & 52.34 & 66.55 \\
w/o alloc. & 47.69 {\color{darkred}\scriptsize$\downarrow$4.65} & 61.94 {\color{darkred}\scriptsize$\downarrow$4.61}  \\
\midrule
only emb. layer & 28.65 {\color{darkred}\scriptsize$\downarrow$23.69} & 44.49 {\color{darkred}\scriptsize$\downarrow$22.06}  \\
only layer 5 & 45.13 {\color{darkred}\scriptsize$\downarrow$7.21} & 61.39 {\color{darkred}\scriptsize$\downarrow$5.16}  \\
only layer 9 & 49.34 {\color{darkred}\scriptsize$\downarrow$3.00} & 63.89 {\color{darkred}\scriptsize$\downarrow$2.66}  \\
only layer 14 & 44.18 {\color{darkred}\scriptsize$\downarrow$8.16} & 60.74 {\color{darkred}\scriptsize$\downarrow$5.81}  \\
only last layer & 36.12 {\color{darkred}\scriptsize$\downarrow$16.22} & 49.35 {\color{darkred}\scriptsize$\downarrow$17.20}  \\
\bottomrule
\end{tabular}
\end{minipage}\hfill
\begin{minipage}[t]{0.405\textwidth}
\footnotesize
\centering
\captionof{table}{Latency analysis results.}
\label{tab:latency}
\setlength{\tabcolsep}{3pt}
\resizebox{\linewidth}{!}{%
\begin{tabular}{@{}lccc@{}}
\toprule
\textbf{Method} & \textbf{Compress (s)} & \textbf{Decode (s)} & \textbf{E2E (s)} \\
\midrule
\multicolumn{4}{c}{\footnotesize\itshape\itshape Context Length = 512} \\
\midrule
Full Context & -- & 25.04 & 25.04 \\
ICAE & 0.43 & 10.16 & 10.59 \\
\method{} & \textbf{0.18} & \textbf{9.83} & \textbf{10.01} \\
\midrule
\multicolumn{4}{c}{\footnotesize\itshape\itshape Context Length = 2048} \\
\midrule
Full Context & -- & 101.09 & 101.09 \\
ICAE & 1.93 & 27.37 & 29.30 \\
\method{} & \textbf{0.84} & \textbf{27.18} & \textbf{28.02} \\
\bottomrule
\end{tabular}
}
\end{minipage}
\end{table*}
We examine our design choices under the same training and evaluation setup as before with Llama-3.2-1B. To isolate the effect of coordinated allocation, we replace the OT-based allocation module with a window attention mechanism that preserves locality but removes global coordination. As shown in \cref{tab:ablation_design}, performance drops consistently across all datasets, indicating that global allocation is critical for effective compression. Next, we remove layer-wise aggregation and use features from only a single LLM layer for compression. Performance degrades under all single-layer settings, with the middle layer (Layer 9) incurring the smallest drop. This result further validates that the design in \method{} that selectively aggregates features from all layers benefit compression.

\subsection{Latency Analysis}
\vspace{-0.5em}

Table \ref{tab:latency} reports the latency breakdown under two context lengths~\footnote{Check Appendix~\ref{appendix:exp-latency} for details of the latency analysis experiment.}. Compared with ICAE, \method{} is more than $2\times$ faster overall in the compression stage across settings. In both settings, the compressed methods are also substantially faster end-to-end than decoding the full context, showing the efficiency gain brought by context compression. These results suggest that \method{} forms compressed representations more efficiently, yielding a stronger efficiency-quality trade-off.

\vspace{-0.5em}
\subsection{Scalability Analysis}
\vspace{-0.5em}
We examine the scalability of our method from three dimensions: higher compression ratios, larger models and longer context.

\textbf{Higher compression ratios.}
Under the same setup, we train the Llama-3.2-1B with compression ratios of $\times 8$ and $\times 16$ and report the results in \cref{tab:ablation_other}. \method{} outperforms the baseline at both $8\times$ (+17.2\% Avg. F1) and $16\times$ (+7.6\%). Notably, \method{} at $8\times$ compression (58.66) surpasses ICAE at $4\times$ compression (56.16; \cref{tab:main}). These results show that \method{} exhibits consistently strong performance under higher compression ratios.
\begin{wraptable}{r}{0.45\columnwidth}
\footnotesize
\centering
\caption{Scalability analysis experiments for \method{} on higher compression ratios, larger models, and longer context.}
\label{tab:ablation_other}
\renewcommand{\arraystretch}{1.08}
\setlength{\tabcolsep}{2pt}
\resizebox{\linewidth}{!}{%
\begin{tabular}{@{}p{0.29\linewidth}@{\hspace{4pt}}p{0.45\linewidth}@{\hspace{6pt}}cc@{}}
\toprule
\textbf{Analysis} & \textbf{Setting} & \textbf{Avg. EM} & \textbf{Avg. F1} \\
\midrule
\multirow{4}{*}{\shortstack{Compression\\ratio}} & $\times 8$ - ICAE & 38.72 & 50.07 \\
& $\times 8$ - \method{} & \textbf{44.83} & \textbf{58.66} \\
& $\times 16$ - ICAE & 37.23 & 48.82 \\
& $\times 16$ - \method{} & \textbf{40.01} & \textbf{52.53} \\
\midrule
\multirow{3}{*}{Model size} & 8B - Prompt tuning & \textbf{65.03} & 75.92 \\
& 8B - ICAE & 59.24 & 70.41 \\
& 8B - \method{} & 63.25 & \textbf{76.48} \\
\midrule
\multirow{3}{*}{Context length} & 8k - Prompt tuning & 27.43 & 37.27 \\
& 8k - ICAE & 21.99 & 30.00 \\
& 8k - \method{} & \textbf{32.46} & \textbf{43.97} \\
\bottomrule
\end{tabular}%
}
\end{wraptable}
\vspace{-0.5em}

\textbf{Larger model sizes.}
To examine the scalability of \method{} on larger models, we train the methods with Llama-3.1-8B under the same setup and report the performance in \cref{tab:ablation_other}. \method{} continues to outperform ICAE by a clear margin and even slightly surpasses the uncompressed prompt-tuning baseline in F1 (76.48 vs.~75.92), showing that its advantage persists at larger model scale.

\textbf{Longer context.}
We train the Llama-3.2-1B model on three long-context QA datasets, TriviaQA~\citep{joshi-etal-2017-triviaqa}, QuALITY~\citep{pang-etal-2022-quality}, and NaturalQuestions~\citep{kwiatkowski-etal-2019-natural}, with contexts truncated to a maximum length of 8k. The results are shown in \cref{tab:ablation_other}. On this setting, \method{} improves over the uncompressed prompt-tuning baseline by 18.0\% in F1 and outperforms ICAE by a large margin. This suggests that \method{} can preserve task-relevant content under more challenging long-context compression settings.

\vspace{-0.5em}
\subsection{Further Analysis}
\vspace{-0.5em}

\subsubsection{Information Allocation}
\vspace{-0.5em}

\begin{figure*}[tb]
     \centering
    \begin{subfigure}[b]{0.6\columnwidth}
    \centering
    \begin{subfigure}[b]{0.48\linewidth}
        \centering
        \includegraphics[width=\linewidth]{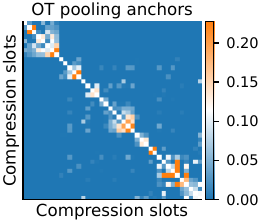}
        \caption{Pearson correlation.}
        \label{fig:pearson}
    \end{subfigure}\hfill
    \begin{subfigure}[b]{0.48\linewidth}
        \centering
        \includegraphics[width=\linewidth]{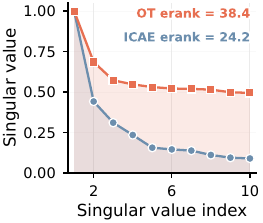}
        \caption{Singular-value spectrum.}
        \label{fig:singulaValue}
    \end{subfigure}
    \label{fig:allocation_analysis}
\end{subfigure}
\hfil
\begin{subfigure}[b]{0.33\columnwidth}
    \centering
    \includegraphics[width=\linewidth]{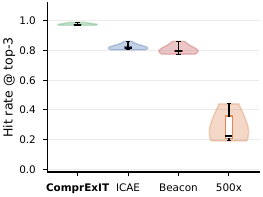}
    \caption{Entity coverage.}
    \label{fig:entity_coverage}
\end{subfigure}
   \caption{Information allocation analysis. \textbf{(a)} Pearson correlation and \textbf{(b)} singular-value spectrum of allocation patterns. \textbf{(c)} Question-relevant entity recall on HotpotQA by TF-IDF rank. Top-10 entities are selected per context, and violin plot shows retrieval consistency across entities.}
   \label{fig:info_allocation}
\end{figure*}

\textbf{Allocation patterns.} To further understand the impact cast by coordinated allocation, we analyze the aggregation behaviors of compression tokens~\footnote{Experiemnt specifications for examining the allocation patterns can be found in Appendix~\ref{appendix:exp-alloc}.}. In \Cref{fig:pearson}, compared with the aggregation pattern of ICAE in \cref{fig:observations_allocation}, the consistently lower inter-slot correlations of \method{} indicate that different compression tokens absorb complementary information rather than collapsing into redundancy. In \Cref{fig:singulaValue}, this same behavior is reflected in the singular-value spectrum: compared with ICAE, \method{} exhibits a slower decay and a higher effective rank, showing that its allocation patterns span a richer and less redundant subspace. 
Taken together, the two views suggest that without global allocation, compression tokens tend to become correlated and low-rank, whereas \method{} encourages a more diverse and information-efficient distribution of content across slots.

\textbf{Coverage of entities.} We further examine whether coordinated allocation helps preserve important entities during compression.
We extract named entities from each context and rank them by TF-IDF score. For each gist token, we identify the top-3 entities it attends to most and take their union within the context. We then compute the coverage of the ten most question-relevant entities by the gist tokens during compression, as shown in \Cref{fig:entity_coverage}. Note that TF-IDF is used here only as a selection heuristic, and its notion of relevance is limited, since entities that are less relevant to one question may still be important for others. \method{} achieves the highest average coverage rate, nearly 1.0. More importantly, its thin violin shape indicates that it consistently covers relevant entities across examples, whereas the larger variance of the baselines suggests that they allocate attention more unevenly and sometimes miss important entities in certain contexts. These results indicate that a well-coordinated allocation is better able to preserve critical contextual information during compression.



\vspace{-0.5em}
\subsubsection{Layer Selection}
\vspace{-0.5em}
\textbf{General layer preference.} Figure \ref{fig:depth-gate}-left shows that the gating mass concentrates on early and middle layers, with a clear emphasis before layer 10 and consistently low weights on later layers. This suggests that effective compression mainly relies on representations that still preserve lexical and contextual information, while deeper layers are more specialized for generation and therefore less suitable for forming compact, decoder-friendly representations.
\begin{figure}[h]
    \centering
    \begin{subfigure}[t]{0.49\columnwidth}
        \centering
        \includegraphics[width=\linewidth]{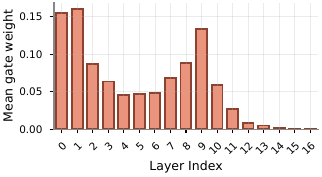}
    \end{subfigure}\hfill
    \begin{subfigure}[t]{0.49\columnwidth}
        \centering
        \includegraphics[width=\linewidth]{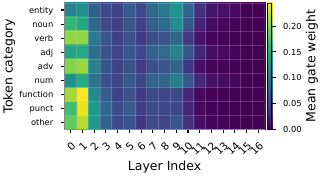}
    \end{subfigure}
    \caption{Layer-selection analysis in \method{}. \textit{Left}: depth-wise gating weights across layers. \textit{Right}: position--layer heatmap showing token-wise layer preferences.}
    \label{fig:depth-gate}
\end{figure}

\textbf{Token-wise layer preference.} The right panel shows a clear dependence on token type. Entity tokens, common nouns, adjectives, and numbers tend to receive higher weights from middle layers, where contextual, relational, and attribute-level information is more salient. In contrast, other tokens are more often routed to the initial layers, indicating that shallow lexical features are usually sufficient for them. Overall, these patterns suggest that \method{} adapts layer selection to token type, reserving richer mid-layer features for informative content words and using earlier-layer features for less informative tokens.

\vspace{-0.5em}
\section{Conclusion}
\label{sec:conclusion}
\vspace{-0.5em}
This work presents \method{}, a new paradigm for soft context compression that formulates compression as explicit information transmission over frozen LLM hidden states, mitigating the limitations of layerwise dilution and lack of allocation. Experiments show that it consistently outperforms prior compression methods. More broadly, the proposed paradigm provides a more flexible design space beyond standard attention-based formulations. Therefore, we hope that this work can inspire future research to explore more design choices in this new paradigm and further advance context compression research.

\textbf{Limitations.} First, due to limited computational resources, we do not test \method{} on larger backbones (e.g., 32B or 70B models) or substantially longer contexts (e.g., 32k or 128k tokens). Second, our experiments focus on text-only language models. Extending the framework to vision-language models, which also accepts tokenized inputs, is left to future work.

\begin{ack}
This work was supported in part by the UK Engineering and Physical Sciences Research Council through a Turing AI Fellowship (grant no. EP/V020579/1, EP/V020579/2) and the Prosperity Partnership scheme (grant no. UKRI566). 

The authors acknowledge use of King's Computational Research, Engineering and Technology Environment (CREATE) at King's College London \citep{kcl2026create}.

The authors acknowledge the use of resources provided by the Dawn National AI Research Resource (AIRR). Dawn is operated by the University of Cambridge and is funded by the UK Government’s Department for Science, Innovation and Technology (DSIT) via UK Research and Innovation the Science and Technology Facilities Council [ST/Z000890/1], Dell Technologies and Intel.
\end{ack}

\bibliographystyle{plainnat}
\bibliography{sections/reference}

\appendix
\clearpage
\section{Appendix}
\subsection{Experiment Specification}
We present more details of the experiments conducted in our paper.

\subsubsection{Measuring Information Loss during Compression}
\label{appendix:exp-recons}
For each method and each compressor, we extract the gist hidden
states at layer $\ell$, apply the method's own final projection where one exists, and feed them to the frozen decoder as
the prefix of the sequence $[\textsc{bos}] \,\Vert\, \mathbf{g}^{(\ell)} \,\Vert\, \mathrm{Embed}(\mathbf{p}) \,\Vert\, \mathrm{Embed}(\mathbf{x})$,
where $\mathbf{p}$ is the fixed prompt
\texttt{"Please reproduce the previous passage exactly."} and $\mathbf{x}$
is the original context. We cacluate the standard causal-LM cross-entropy loss on the
context tokens only. We then aggregate token-weighted across the dataset,
$\widehat{\mathcal{L}}_\ell = \sum_i \mathrm{loss}_i \,/\, \sum_i n_i$, where
$n_i$ is the number of non-padded target tokens in example $i$.
Contexts are passages drawn from SQuAD~\citep{rajpurkar-etal-2016-squad} and HotpotQA~\citep{yang-etal-2018-hotpotqa} datasets. We use the compression models trained with Llama-3.2-1B in this experiments.

\subsubsection{Latency Analysis}
\label{appendix:exp-latency}
We benchmark each method on synthetic batches with fixed context lengths of 512 and 2048, a batch size of 8, a generation length of 128 tokens, and a compression ratio of 4. We use NVIDIA L40S ADA GPU, with 48GB memory. For each sequence, we run $50$ untimed warm-up iterations followed by $200$ timed iterations, and report the mean and standard deviation. We separately time the \emph{compression} stage (the encoder forward pass that produces the gist) and the \emph{decoding} stage (autoregressive generation of $128$ tokens), and report both along with their sum (E2E).

\subsubsection{Gist Token Allocation Patterns}
\label{appendix:exp-alloc}
We probe whether the $K$ compressed slots of each method actually carry
distinct content, or whether they collapse onto similar input regions. We run our experiments on the SQuAD~\citep{rajpurkar-etal-2016-squad} dataset with 1000 random samples. For ICAE we take the compressor's last layer attention map of the gist tokens and average over the attention heads. For \method{}, we use the captured transport plan as the allocation scores. In both cases this yields a $[K, N]$ matrix $A$ whose rows are the per-slot distributions
over input tokens. The \emph{similarity heatmap} is the $K \times K$
Pearson-correlation matrix between these rows,
$\rho(a_i, a_j) = (a_i - \bar a_i)^\top (a_j - \bar a_j) / (\|a_i - \bar a_i\|_2 \|a_j - \bar a_j\|_2)$,
reordered hierarchically for readability. The
\emph{attention spectrum} is the sequence of singular values of $A$, plus
the effective rank
$\mathrm{erank}(A) = \exp\!\big(-\!\sum_i \tilde{\sigma}_i \log \tilde{\sigma}_i\big)$
with $\tilde{\sigma}_i = \sigma_i / \sum_j \sigma_j$. A high
effective rank and a near-diagonal similarity heatmap indicate that the
$K$ compressed slots cover the input non-redundantly.

\subsection{Datasets}
\label{app:data}
\cref{tab:dataset_stats} shows the statistics of the datasets we use in the experiments.

\begin{table}[htbp]
\caption{Statistics of the training and evaluation datasets used, including in-domain and out-of-domain datasets. \#Train represents the number of training samples and \#Validation represents the number of validation samples.}
\centering
\small
\setlength{\tabcolsep}{10pt}
\renewcommand{\arraystretch}{1.15}

\resizebox{\linewidth}{!}{%
\begin{tabular}{lcccc}
\toprule
\textbf{Dataset} & \textbf{Average Question Length} & \textbf{Average Context Length} & \textbf{\#Train} & \textbf{\#Validation} \\
\midrule
SQuAD              & 11 & 137 & 86,588  & 10,507 \\
NewsQA             &  8 & 599 & 74,160  &  4,212 \\
TriviaQA           & 16 & 784 & 61,688  &  7,785 \\
SearchQA           & 17 & 749 & 117,384 & 16,980 \\
HotpotQA           & 22 & 232 & 72,928  &  5,904 \\
Natural Questions  &  9 & 153 & 104,071 & 12,836 \\
\midrule
BioASQ             & 11 & 248 & --      & 1,518 \\
DROP               & 11 & 243 & --      & 1,501 \\
DuoRC              &  9 & 681 & --      & 1,503 \\
RACE               & 12 & 349 & --      & 1,502 \\
RelationExtraction &  9 &  30 & --      & 1,500 \\
TextbookQA         & 11 & 657 & --      & 1,508 \\
\bottomrule
\end{tabular}%
}
\label{tab:dataset_stats}
\end{table}

\subsection{Model Inference Implementation}
We carefully deal with the position of the padding tokens, by which we find bring improvements to all methods. We keep the no-padding setup during NTP to fully make use of the utility of the GPUs. In SFT, as we stick to the question independent setup, we have to encode the contexts first and feed the context compression tokens along with the question tokens to the decoder. For efficient batch compression, we first apply left-padding to the context tokens first for the compressors. We then concatenate the compression tokens with padded question and answer tokens, and perform an operation to move all the padding tokens to the left most side according to the attention mask. We also create the shifted new attention mask and labels accordingly. This ensure that the generation start from a non-padding token and there is no position gap between the compression tokens and the questions.

\subsection{Evaluation on More Tasks.}
\label{appendix:exp-more-tasks}
We further evaluate \method{} on summarization, fact-checking, and in-context learning. For summarization, we fine-tune all methods on XSum~\citep{narayan-etal-2018-dont} with Llama-3.2-1B and report the results in \cref{tab:more_tasks_sum}. \method{} achieves the best ROUGE-1/ROUGE-L scores, 0.333/0.262, outperforming both ICAE and the uncompressed prompt-tuning baseline. This suggests that \method{} preserves not only answer-relevant facts but also the broader discourse information needed for abstractive generation. For fact-checking and in-context learning, we directly evaluate the NTP-trained Llama-3.2-3B models on FEVER~\citep{thorne-etal-2018-fever} and SST-2~\citep{socher-etal-2013-recursive}, respectively, as shown in \cref{tab:more_tasks_fever_icl}. On FEVER, \method{} achieves the best score, 64.39, surpassing the prompt-tuning baseline by 3.91\% and ICAE by 8.69\%. On SST-2 ICL, \method{} obtains 69.17, improving over ICAE by 4.79\% and remaining competitive with the prompt-tuning baseline (72.34). Overall, these results show that \method{} generalizes consistently well beyond QA.

\begin{table*}[t]
\small
\centering
\begin{minipage}[t]{0.48\textwidth}
\centering
\captionof{table}{ROUGE-1 and ROUGE-L scores on the summarization (XSum) dataset.}
\label{tab:more_tasks_sum}
\begin{tabular*}{\linewidth}{@{\extracolsep{\fill}}lcc@{}}
\toprule
\textbf{Model} & \textbf{ROUGE-1} & \textbf{ROUGE-L} \\
\midrule
ICAE & 0.324 & 0.257 \\
Prompt Tuning & 0.320 & 0.253 \\
\textbf{\method{}} & \textbf{0.333} & \textbf{0.262} \\
\bottomrule
\end{tabular*}
\end{minipage}\hfill
\begin{minipage}[t]{0.48\textwidth}
\centering
\captionof{table}{F1 scores on fact checking (FEVER) and in-context learning (SST-2) datasets.}
\label{tab:more_tasks_fever_icl}
\begin{tabular*}{\linewidth}{@{\extracolsep{\fill}}lcc@{}}
\toprule
\textbf{Model} & \textbf{FEVER} & \textbf{SST-2 ICL} \\
\midrule
ICAE & 59.24 & 66.01 \\
Prompt Tuning & 61.97 & \textbf{72.34} \\
\textbf{\method{}} & \textbf{64.39} & 69.17 \\
\bottomrule
\end{tabular*}
\end{minipage}
\end{table*}

\subsection{Out-of-Domain Results}
Table~\ref{tab:mrqa_ood} reports the full results on the six out-of-domain MRQA benchmarks. Overall, \method{} remains the strongest compression method across both Llama-3.2-1B-Base and Llama-3.2-3B-Base, showing that its learned compressed representations transfer well beyond the training domains. We note that ICAE is competitive on RelationExtraction, whose contexts are very short on average. Because prior methods use a fixed budget of 128 compression tokens, this dataset effectively gives them a much smaller compression ratio, which partially explains the smaller gap on that benchmark.

\begin{table*}[!t]
\centering
\caption{Experimental results on six \textbf{out-of-domain} MRQA benchmark datasets. Out-of-domain: these datasets are not included in the training data. Best among compression baselines is \textbf{bold}. Our method is colored in \colorbox{blue!12}{\quad}. The prompt-tuned uncompressed baseline is colored in \colorbox{gray!20}{\quad}. Note that ICAE and 500x have smaller compression ratio on samples of context length shorter than 512 because they are trained with a fixed set (128) of compression tokens. Particularly, on RelationExtraction dataset whose average context length is only 30, they share overly sufficient compression bandwith.} 

\label{tab:mrqa_ood}
\resizebox{\textwidth}{!}{%
\begin{tabular}{lcccccccccccccc}
\toprule
\multirow{2}{*}{Methods} &
\multicolumn{2}{c}{RelationExtraction} &
\multicolumn{2}{c}{BioASQ} &
\multicolumn{2}{c}{TextbookQA} &
\multicolumn{2}{c}{DuoRC} &
\multicolumn{2}{c}{DROP} &
\multicolumn{2}{c}{RACE} &
\multicolumn{2}{c}{Average} \\
\cmidrule(lr){2-3}\cmidrule(lr){4-5}\cmidrule(lr){6-7}
\cmidrule(lr){8-9}\cmidrule(lr){10-11}\cmidrule(lr){12-13}\cmidrule(lr){14-15}
& EM & F1
& EM & F1
& EM & F1
& EM & F1
& EM & F1
& EM & F1
& EM & F1 \\
\midrule

\multicolumn{15}{c}{\itshape Llama-3.2-1B-Base} \\
\midrule
\rowcolor{gray!10}
Prompt tuning [w/ context]
& 69.91 & 81.15
& 65.96 & 78.80
& 52.10 & 59.61
& 38.51 & 47.40
& 32.54 & 43.14
& 33.68 & 46.30
& 48.78 & 59.40 \\

Zero-shot [w/ context]
& 27.48 & 47.87
& 6.98 & 28.84
& 18.16 & 29.97
& 17.46 & 30.54
& 18.43 & 32.30
& 10.53 & 22.71
& 16.51 & 32.71 \\

Zero-shot [w/o context]
& 5.09 & 10.33
& 14.76 & 20.56
& 11.11 & 16.03
& 0.60 & 2.55
& 14.84 & 19.73
& 0.45 & 2.88
& 7.81 & 12.01 \\

\midrule
ICAE \citep{ge2024incontext}
& \textbf{62.28} & \textbf{75.97}
& 42.62 & 54.09
& 28.21 & 34.60
& 12.59 & 18.80
& 24.15 & 33.23
& 9.50 & 16.95
& 29.89 & 38.94 \\

500x \citep{li-etal-2025-500xcompressor}
& 6.31 & 16.30
& 22.54 & 35.52
& 16.77 & 24.40
& 1.07 & 3.91
& 8.18 & 18.32
& 1.93 & 5.70
& 9.47 & 17.36 \\

Beacon \citep{zhang2025long}
& 33.18 & 48.82
& 32.25 & 47.84
& 5.92 & 10.34
& 2.40 & 4.90
& 21.82 & 29.24
& 15.88 & 25.35
& 18.57 & 27.75 \\

\rowcolor{blue!6}
\method{}
& 59.46 & 74.96
& \textbf{48.40} & \textbf{63.74}
& \textbf{41.65} & \textbf{50.73}
& \textbf{28.78} & \textbf{38.77}
& \textbf{29.01} & \textbf{39.86}
& \textbf{23.00} & \textbf{37.95}
& \textbf{38.38} & \textbf{51.00} \\
\midrule
\multicolumn{15}{c}{\itshape Llama-3.2-3B-Base} \\
\midrule

\rowcolor{gray!10}
Prompt tuning [w/ context]
& 77.37 & 86.70
& 67.62 & 81.50
& 60.48 & 68.75
& 45.50 & 55.63
& 48.84 & 57.18
& 43.62 & 56.35
& 57.24 & 67.68 \\

Zero-shot [w/ context]
& 54.24 & 68.46
& 43.22 & 64.31
& 36.46 & 46.66
& 28.31 & 42.36
& 23.69 & 37.45
& 22.70 & 38.16
& 34.77 & 49.57 \\

Zero-shot [w/o context]
& 14.28 & 21.03
& 20.94 & 25.76
& 23.29 & 28.89
& 2.07 & 4.44
& 16.17 & 21.26
& 1.78 & 5.65
& 13.09 & 17.84 \\

\midrule
ICAE \citep{ge2024incontext}
& \textbf{70.66} & \textbf{82.10}
& 55.85 & 67.21
& 42.25 & 50.70
& 23.65 & 32.81
& 35.33 & 44.89
& 21.51 & 33.85
& 41.54 & 51.93 \\

500x \citep{li-etal-2025-500xcompressor}
& 65.13 & 79.67
& 47.67 & 60.82
& 29.41 & 38.10
& 22.25 & 30.91
& 29.74 & 39.12
& 19.14 & 30.58
& 35.56 & 46.53 \\

Beacon \citep{zhang2025long}
& 58.92 & 71.62
& 54.52 & 66.86
& 10.98 & 15.68
& 4.73 & 7.95
& 39.39 & 50.55
& 30.71 & 43.95
& 33.21 & 42.77 \\

\rowcolor{blue!6}
\method{}
& 65.31 & 77.25
& \textbf{59.64} & \textbf{73.87}
& \textbf{56.09} & \textbf{64.50}
& \textbf{34.31} & \textbf{44.98}
& \textbf{43.58} & \textbf{53.16}
& \textbf{31.60} & \textbf{47.69}
& \textbf{47.19} & \textbf{59.34} \\

\bottomrule
\end{tabular}%
}
\end{table*}

\subsection{The Allocation Matrix}
\begin{figure}[htbp] 
    \centering
    \subfloat[ICAE]{
        \includegraphics[width=0.49\linewidth,height=0.28\textheight,keepaspectratio]{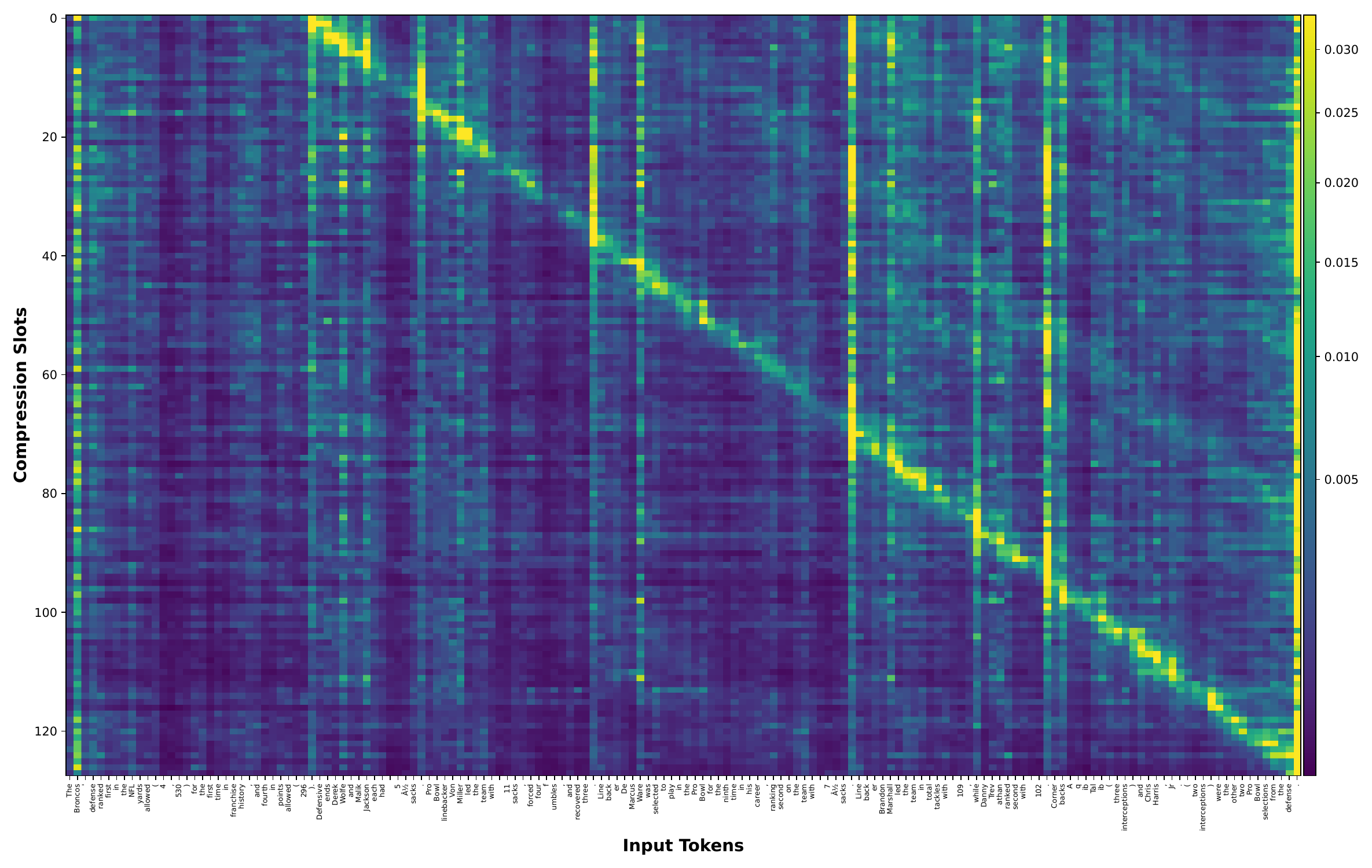}
        \label{fig:width-icae}
    }
    \subfloat[\method{}]{
        \includegraphics[width=0.49\linewidth,height=0.28\textheight,keepaspectratio]{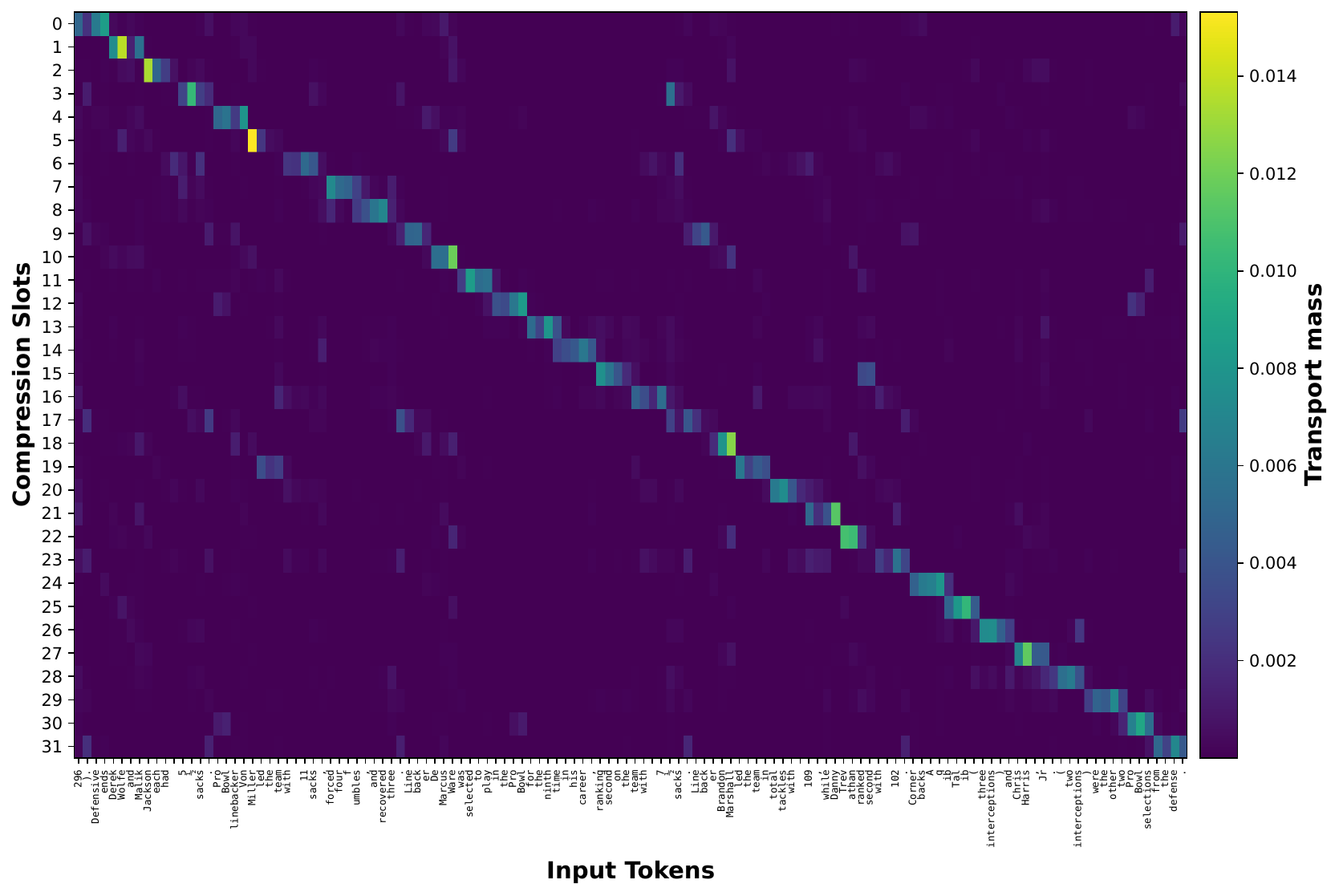}
        \label{fig:width-ours}
    }
    \caption{Last-layer attention heatmap of gist tokens produced by ICAE (left) and width-wise information transmission plan in \method{} (right).}
    \label{fig:width-plan}
\end{figure}

Figure~\ref{fig:width-ours} visualizes the learned width-wise transmission plan in \method{}. Compared with existing methods (e.g., ICAE in Figure~\ref{fig:width-icae}), \method{} exhibits a more structured allocation pattern: each compression slot is softly anchored to a contiguous local region of input tokens, while still retaining non-negligible connections to distant tokens. This enables \method{} to preserve the semantic order of the context without sacrificing the ability to capture long-range dependencies. In contrast, although ICAE tries to maintain an order bias (there is a diagonal pattern of high attention values in Figure~\cref{fig:width-icae}), its compression-token attention patterns are weakly localized. Moreover, it can be observed that some important tokens (e.g. entity ``Marcus", number ``4.") are ignored collectively by most of the compression tokens, but some entities (e.g. Broncos) are overly focused. We attribute these observations to the lack of coordinated allocation.

\subsection{Distribution Mismatch Induced by Layerwise Dillution}
\label{appendix:dist-mismatch}
In this subsection, we formalize the potential mismatch introduced by layerwise dilution. The key idea is that if the compression-token representations drift slightly at each layer, then these small discrepancies can accumulate with depth and enlarge the mismatch between the final compressor output and the representation space expected by the decoder.

Let $p(Z^{(\ell)})$ denote the empirical distribution of compression-token representations $\{\bm{z^{(\ell)}_k}\}_{k=1}^K$ at layer $\ell$, and let $p_{\text{dec}}$ denote the representation distribution expected by the decoder. We measure this mismatch using the Wasserstein distance $\mathcal{W}$.

\begin{proposition}[Accumulation of layerwise drift]
Assume $p(Z^{(0)}),\dots,p(Z^{(L)})$ and $p_{\text{dec}}$ all have finite first moments. Define the final compressor--decoder mismatch as
\begin{align}
\mathcal{M}
&:= \mathcal{W}\!\left(p(Z^{(L)}),\,p_{\text{dec}}\right).
\end{align}
Then
\begin{align}
\mathcal{M}
&\le \sum_{\ell=0}^{L-1} \mathcal{W}\!\left(p(Z^{(\ell+1)}),\,p(Z^{(\ell)})\right)
 + \mathcal{W}\!\left(p(Z^{(0)}),\,p_{\text{dec}}\right).
\label{eq:wd_mismatch}
\end{align}
\end{proposition}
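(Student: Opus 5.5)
The bound is just the triangle inequality for the Wasserstein distance, applied along the chain $p(Z^{(L)}),p(Z^{(L-1)}),\dots,p(Z^{(0)}),p_{\text{dec}}$. I will take $\mathcal{W}$ to be the $1$-Wasserstein distance on the space $\mathcal{P}_1(\mathbb{R}^d)$ of Borel probability measures with finite first moment, with the Euclidean ground metric. The finite-first-moment hypothesis serves only to make every term in the statement finite. Note that each empirical distribution $p(Z^{(\ell)})$ is supported on $K$ points, so it automatically lies in $\mathcal{P}_1$.

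\textbf{Step 1: metric properties.} The first step is to record that $\mathcal{W}$ is a metric on $\mathcal{P}_1(\mathbb{R}^d)$; the property we actually need is the triangle inequality
\[
\mathcal{W}(\mu,\nu)\le \mathcal{W}(\mu,\eta)+\mathcal{W}(\eta,\nu).
\]
This is standard \citep{Villani2008OptimalTO}, so I would cite it. For completeness I would sketch the argument. Take optimal couplings $\gamma_1$ of $(\mu,\eta)$ and $\gamma_2$ of $(\eta,\nu)$; these exist by tightness and lower semicontinuity of the cost. Glue them along the common marginal $\eta$ (the gluing lemma, via disintegration) to obtain a measure $\sigma$ on $(\mathbb{R}^d)^3$ whose $(1,3)$-marginal couples $\mu$ and $\nu$. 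Integrating the pointwise inequality $|x-z|\le|x-y|+|y-z|$ against $\sigma$ then gives the claim. The gluing construction is the only genuinely technical ingredient. Here it is elementary, because all the $p(Z^{(\ell)})$ are finitely supported, so disintegration reduces to conditioning on atoms. The only step where the full lemma is needed is the one involving $p_{\text{dec}}$, which may be a general measure; that case is covered by the citation.

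\textbf{Step 2: induction along the chain.} Next I would prove by induction on $m$ that
\[
\mathcal{W}\bigl(p(Z^{(m)}),p_{\text{dec}}\bigr)\le\sum_{\ell=0}^{m-1}\mathcal{W}\bigl(p(Z^{(\ell+1)}),p(Z^{(\ell)})\bigr)+\mathcal{W}\bigl(p(Z^{(0)}),p_{\text{dec}}\bigr).
\]
The base case $m=0$ holds with equality. For the inductive step, apply the triangle inequality with intermediate measure $p(Z^{(m)})$:
\[
\mathcal{W}\bigl(p(Z^{(m+1)}),p_{\text{dec}}\bigr)\le\mathcal{W}\bigl(p(Z^{(m+1)}),p(Z^{(m)})\bigr)+\mathcal{W}\bigl(p(Z^{(m)}),p_{\text{dec}}\bigr),
\]
and then substitute the induction hypothesis for the second term. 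Setting $m=L$ and recalling $\mathcal{M}=\mathcal{W}(p(Z^{(L)}),p_{\text{dec}})$ gives \eqref{eq:wd_mismatch}.

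\textbf{Where the care is needed.} I do not expect a real obstacle; the only point needing care is the setting. If one wanted a different metric, such as $\mathcal{W}_p$ with $p>1$, the same argument applies verbatim, but it would then require finite $p$-th moments rather than first moments. The stated first-moment hypothesis matches $\mathcal{W}_1$, which is why I would fix that choice at the outset.
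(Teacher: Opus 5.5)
Your proposal is correct and follows the paper's proof: the paper likewise applies the Wasserstein triangle inequality recursively down the chain from $p(Z^{(L)})$ to $p(Z^{(0)})$. That recursion is exactly your induction, though the paper does not fix $\mathcal{W}_1$ explicitly or sketch the gluing argument. One small refinement: in every application the intermediate (glued) measure is some finitely supported $p(Z^{(m)})$, so atom-wise conditioning suffices even when $p_{\text{dec}}$ is a general measure, and the full gluing lemma is never needed.
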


\begin{proof}
By the triangle inequality for the Wasserstein distance~\citep{clement2008elementary},
\begin{align}
\mathcal{W}\!\left(p(Z^{(L)}),\,p_{\text{dec}}\right)
&\le \mathcal{W}\!\left(p(Z^{(L)}),\,p(Z^{(L-1)})\right)
+ \mathcal{W}\!\left(p(Z^{(L-1)}),\,p_{\text{dec}}\right).
\end{align}
Applying the same inequality recursively to the second term yields
\begin{align}
\mathcal{W}\!\left(p(Z^{(L-1)}),\,p_{\text{dec}}\right)
&\le \mathcal{W}\!\left(p(Z^{(L-1)}),\,p(Z^{(L-2)})\right)
+ \mathcal{W}\!\left(p(Z^{(L-2)}),\,p_{\text{dec}}\right),
\end{align}
and continuing this expansion down to layer $0$ gives
\begin{align}
\mathcal{M}
&\le \sum_{\ell=0}^{L-1} \mathcal{W}\!\left(p(Z^{(\ell+1)}),\,p(Z^{(\ell)})\right)
+ \mathcal{W}\!\left(p(Z^{(0)}),\,p_{\text{dec}}\right).
\end{align}
This proves the claim.
\end{proof}

The proposition shows that the final mismatch is upper-bounded by two terms: the initial mismatch at the compressor input, and the cumulative layerwise drift accrued across the encoder. Therefore, even if each layer introduces only a modest shift, the total drift can still grow with depth, increase $\mathcal{M}$, and make the compression objective harder to optimize. Together with the extensive experiments in Sections~3 and Appendix~A, this provides both theoretical and empirical support for the role of information dilution.



\subsection{Training}
\subsubsection{Optimization Behavior}
\begin{figure}[!t]
    \centering
    \includegraphics[width=0.72\columnwidth]{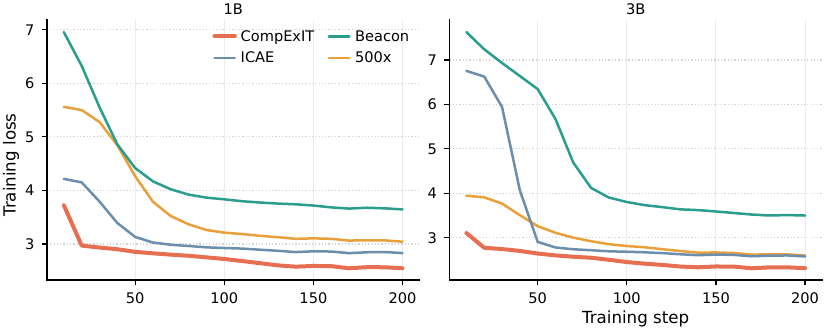}
    \caption{The training curves of baseline methods and \method{} under the next-token prediction task.}
    \label{fig:training_curve}
\end{figure}
We plot the NTP training curves in \cref{fig:training_curve}. \method{} starts from the lowest loss and converges faster to the best plateau, indicating a smaller compressor-decoder mismatch and easier optimization. This supports our core claim that layer aggregation and coordinated allocation produce compressed states that preserve useful information while remaining decoder-friendly.

\subsubsection{Training Details}
\label{appendix:training_details}
\cref{tab:ntp_ours,tab:ntp_beacon,tab:ntp_icae_500x} present the hyperparameters we use when training \method{} and the baseline methods.

\begin{table}[htbp]
\centering
\caption{NTP Training configuration for \method{}.}
\label{tab:ntp_ours}
\begin{tabularx}{0.75\linewidth}{@{}l>{\centering\arraybackslash}X@{}}
\toprule
\textbf{Item} & \textbf{\method{}} \\
\midrule
\multicolumn{2}{@{}l@{}}{\textbf{Run \& setup}} \\
Dataset & SlimPajama-6B \\
Number of Tokens & 1 Billion \\
Device & 4$\times$ NVIDIA-A100 \\
Precision & Bfloat16 \\
\midrule
\multicolumn{2}{@{}l@{}}{\textbf{Sequence \& compression}} \\
Context length & 512 \\
Generation length & 128 \\
Compression ratio & 4 \\
\midrule
\multicolumn{2}{@{}l@{}}{\textbf{Optimal-Transport (OT)}} \\
OT window size & 128 \\
OT iterations & 30 \\
OT projection dimension & 256 \\
Layerwise gate hidden size & 256 \\
\midrule
\multicolumn{2}{@{}l@{}}{\textbf{Optimization}} \\
Learning rate & $1\times 10^{-4}$ \\
Warmup ratio & 0.05 \\
Max grad norm & 20.0 \\
Batch size (per device) & 16 \\
Gradient accumulation steps & 32 \\
Actual Batch Size & 2048\\
Epochs & 1 \\
\bottomrule
\end{tabularx}
\end{table}

\begin{table}[htbp]
\centering
\caption{NTP Training configuration for ICAE and 500x.}
\label{tab:ntp_icae_500x}
\begin{tabularx}{0.75\linewidth}{@{}l>{\centering\arraybackslash}X>{\centering\arraybackslash}X@{}}
\toprule
\textbf{Item} & \textbf{ICAE} & \textbf{500x} \\
\midrule
\multicolumn{3}{@{}l@{}}{\textbf{Run \& setup}} \\
Dataset & SlimPajama-6B & SlimPajama-6B \\
Number of Tokens & 1 Billion & 1 Billion \\
Device & 4$\times$ NVIDIA-A100 & 4$\times$ NVIDIA-A100 \\
Precision & Bfloat16 & Bfloat16 \\
\midrule
\multicolumn{3}{@{}l@{}}{\textbf{Sequence \& compression}} \\
Context length & 512 & 512 \\
Generation length & 128 & 128 \\
Compression ratio & 4 & 4 \\
\midrule
\multicolumn{3}{@{}l@{}}{\textbf{Memory}} \\
Number of memory tokens & 128 & 128 \\
\midrule
\multicolumn{3}{@{}l@{}}{\textbf{Optimization}} \\
Learning rate & $3\times 10^{-5}$ & $3\times 10^{-5}$ \\
Warmup ratio & 0.05 & 0.05 \\
Max grad norm & 20.0 & 20.0 \\
    Batch size (per device) & 16 & 32 \\
Gradient accumulation steps & 16 & 32 \\
Actual Batch Size & 2048 & 2048 \\
Epochs & 1 & 1 \\
\midrule
\multicolumn{3}{@{}l@{}}{\textbf{LoRA (compressor)}} \\
Enabled & True & True \\
Rank $r$ & 128 & 128 \\
Alpha $\alpha$ & 32 & 32 \\
Dropout & 0.05 & 0.05 \\
Bias & none & none \\
Task type & CAUSAL\_LM & CAUSAL\_LM \\
\bottomrule
\end{tabularx}
\end{table}

\begin{table}[htbp]
\centering
\caption{NTP Training configuration for Activation Beacon.}
\label{tab:ntp_beacon}
\begin{tabularx}{0.75\linewidth}{@{}l>{\centering\arraybackslash}X@{}}
\toprule
\textbf{Item} & \textbf{Activation Beacon} \\
\midrule
\multicolumn{2}{@{}l@{}}{\textbf{Run \& setup}} \\
Dataset & SlimPajama-6B \\
Number of Tokens & 1 Billion \\
Device & 4$\times$ NVIDIA-A100 \\
Precision & Bfloat16 \\
\midrule
\multicolumn{2}{@{}l@{}}{\textbf{Sequence}} \\
Context length & 512 \\
Generation length & 128 \\
\midrule
\multicolumn{2}{@{}l@{}}{\textbf{Activation Beacon configuration}} \\
Beacon enabled & True \\
Beacon window / stride & 64 / 64 \\
Beacon ratio & 4 \\
Beacon attention & full-coverage \\
Attend previous beacons & True \\
Trainable beacon params & q, k, v \\
Beacon position & interleave \\
Grouping by stride & strict \\
\midrule
\multicolumn{2}{@{}l@{}}{\textbf{Optimization}} \\
Learning rate & $1\times 10^{-4}$ \\
Max grad norm & 20.0 \\
Batch size (per device) & 16 \\
Gradient accumulation steps & 32 \\
Actual Batch Size & 2048 \\
Epochs & 1 \\
\bottomrule
\end{tabularx}
\end{table}

\subsection{Broader Impact}
\label{appendix:broader-impact}
This work studies learned context compression for long-context language models. Its primary potential benefit is improved efficiency: effective compression can reduce memory usage, latency, and computational cost while preserving salient information from the original context. In turn, this may make document-grounded models more accessible to researchers and practitioners with limited hardware resources, and may reduce the energy required to deploy such systems at scale. More efficient long-context modeling may also broaden the practical use of language models in settings where rapid processing of long documents is important, such as literature review, writing assistance, and educational tools. Gist tokens can also function as encrypted memory or a communication medium for LLM agents while remaining unreadable to humans.

At the same time, context compression introduces important risks. Any compression method may discard rare but critical details, qualifiers, or provenance signals, which can lead to overconfident but incomplete answers. Such failures may be especially concerning in high-stakes applications such as medicine, law, science, or public policy, where omitted evidence can materially affect the correct conclusion. In addition, compressed-context models can still inherit social biases, factual errors, and privacy risks from their pretrained backbone and training data, and efficient processing of long documents could be misused for large-scale analysis of sensitive text. We therefore recommend task-specific evaluation, preserving access to the original context whenever possible, and avoiding deployment as a substitute for expert judgment in safety-critical settings.

\clearpage
\section*{NeurIPS Paper Checklist}

\begin{enumerate}

\item {\bf Claims}
    \item[] Question: Do the main claims made in the abstract and introduction accurately reflect the paper's contributions and scope?
    \item[] Answer: \answerYes{} 
    \item[] Justification: The abstract and introduction clearly state the paper's main claims: the two structural bottlenecks, the proposed explicit-transmission framework, and the empirical gains---and these are supported by the method and experiments in \cref{sec:method} and \cref{section:exp_details}.
    \item[] Guidelines:
    \begin{itemize}
        \item The answer \answerNA{} means that the abstract and introduction do not include the claims made in the paper.
        \item The abstract and/or introduction should clearly state the claims made, including the contributions made in the paper and important assumptions and limitations. A \answerNo{} or \answerNA{} answer to this question will not be perceived well by the reviewers. 
        \item The claims made should match theoretical and experimental results, and reflect how much the results can be expected to generalize to other settings. 
        \item It is fine to include aspirational goals as motivation as long as it is clear that these goals are not attained by the paper. 
    \end{itemize}

\item {\bf Limitations}
    \item[] Question: Does the paper discuss the limitations of the work performed by the authors?
    \item[] Answer: \answerYes{} 
    \item[] Justification: The paper explicitly discusses limitations in the \cref{sec:conclusion}, including the absence of experiments on much larger backbones/longer contexts and the current focus on text-only models.
    \item[] Guidelines:
    \begin{itemize}
        \item The answer \answerNA{} means that the paper has no limitation while the answer \answerNo{} means that the paper has limitations, but those are not discussed in the paper. 
        \item The authors are encouraged to create a separate ``Limitations'' section in their paper.
        \item The paper should point out any strong assumptions and how robust the results are to violations of these assumptions (e.g., independence assumptions, noiseless settings, model well-specification, asymptotic approximations only holding locally). The authors should reflect on how these assumptions might be violated in practice and what the implications would be.
        \item The authors should reflect on the scope of the claims made, e.g., if the approach was only tested on a few datasets or with a few runs. In general, empirical results often depend on implicit assumptions, which should be articulated.
        \item The authors should reflect on the factors that influence the performance of the approach. For example, a facial recognition algorithm may perform poorly when image resolution is low or images are taken in low lighting. Or a speech-to-text system might not be used reliably to provide closed captions for online lectures because it fails to handle technical jargon.
        \item The authors should discuss the computational efficiency of the proposed algorithms and how they scale with dataset size.
        \item If applicable, the authors should discuss possible limitations of their approach to address problems of privacy and fairness.
        \item While the authors might fear that complete honesty about limitations might be used by reviewers as grounds for rejection, a worse outcome might be that reviewers discover limitations that aren't acknowledged in the paper. The authors should use their best judgment and recognize that individual actions in favor of transparency play an important role in developing norms that preserve the integrity of the community. Reviewers will be specifically instructed to not penalize honesty concerning limitations.
    \end{itemize}

\item {\bf Theory assumptions and proofs}
    \item[] Question: For each theoretical result, does the paper provide the full set of assumptions and a complete (and correct) proof?
    \item[] Answer: \answerYes{} 
    \item[] Justification: The appendix \cref{appendix:dist-mismatch} includes a formal theoretical result (the proposition on accumulation of layerwise drift), states its assumption explicitly (finite first moments for the relevant distributions), and provides a complete proof based on the triangle inequality for Wasserstein distance.
    \item[] Guidelines:
    \begin{itemize}
        \item The answer \answerNA{} means that the paper does not include theoretical results. 
        \item All the theorems, formulas, and proofs in the paper should be numbered and cross-referenced.
        \item All assumptions should be clearly stated or referenced in the statement of any theorems.
        \item The proofs can either appear in the main paper or the supplemental material, but if they appear in the supplemental material, the authors are encouraged to provide a short proof sketch to provide intuition. 
        \item Inversely, any informal proof provided in the core of the paper should be complemented by formal proofs provided in appendix or supplemental material.
        \item Theorems and Lemmas that the proof relies upon should be properly referenced. 
    \end{itemize}

    \item {\bf Experimental result reproducibility}
    \item[] Question: Does the paper fully disclose all the information needed to reproduce the main experimental results of the paper to the extent that it affects the main claims and/or conclusions of the paper (regardless of whether the code and data are provided or not)?
    \item[] Answer: \answerYes{} 
    \item[] Justification: We provide detailed description of our method in \cref{sec:method}. We provide details of our experiment setups in \cref{section:exp_details} and include the specifications to reproduce our training process in \cref{appendix:training_details}. 
    \item[] Guidelines:
    \begin{itemize}
        \item The answer \answerNA{} means that the paper does not include experiments.
        \item If the paper includes experiments, a \answerNo{} answer to this question will not be perceived well by the reviewers: Making the paper reproducible is important, regardless of whether the code and data are provided or not.
        \item If the contribution is a dataset and\slash or model, the authors should describe the steps taken to make their results reproducible or verifiable. 
        \item Depending on the contribution, reproducibility can be accomplished in various ways. For example, if the contribution is a novel architecture, describing the architecture fully might suffice, or if the contribution is a specific model and empirical evaluation, it may be necessary to either make it possible for others to replicate the model with the same dataset, or provide access to the model. In general. releasing code and data is often one good way to accomplish this, but reproducibility can also be provided via detailed instructions for how to replicate the results, access to a hosted model (e.g., in the case of a large language model), releasing of a model checkpoint, or other means that are appropriate to the research performed.
        \item While NeurIPS does not require releasing code, the conference does require all submissions to provide some reasonable avenue for reproducibility, which may depend on the nature of the contribution. For example
        \begin{enumerate}
            \item If the contribution is primarily a new algorithm, the paper should make it clear how to reproduce that algorithm.
            \item If the contribution is primarily a new model architecture, the paper should describe the architecture clearly and fully.
            \item If the contribution is a new model (e.g., a large language model), then there should either be a way to access this model for reproducing the results or a way to reproduce the model (e.g., with an open-source dataset or instructions for how to construct the dataset).
            \item We recognize that reproducibility may be tricky in some cases, in which case authors are welcome to describe the particular way they provide for reproducibility. In the case of closed-source models, it may be that access to the model is limited in some way (e.g., to registered users), but it should be possible for other researchers to have some path to reproducing or verifying the results.
        \end{enumerate}
    \end{itemize}

\item {\bf Open access to data and code}
    \item[] Question: Does the paper provide open access to the data and code, with sufficient instructions to faithfully reproduce the main experimental results, as described in supplemental material?
    \item[] Answer: \answerNo{} 
    \item[] Justification: The evaluation datasets and base models are public, and the paper provides detailed experimental instructions, but our implementation itself is not released at submission time. We therefore state that code will be released upon acceptance.
    \item[] Guidelines:
    \begin{itemize}
        \item The answer \answerNA{} means that paper does not include experiments requiring code.
        \item Please see the NeurIPS code and data submission guidelines (\url{https://neurips.cc/public/guides/CodeSubmissionPolicy}) for more details.
        \item While we encourage the release of code and data, we understand that this might not be possible, so \answerNo{} is an acceptable answer. Papers cannot be rejected simply for not including code, unless this is central to the contribution (e.g., for a new open-source benchmark).
        \item The instructions should contain the exact command and environment needed to run to reproduce the results. See the NeurIPS code and data submission guidelines (\url{https://neurips.cc/public/guides/CodeSubmissionPolicy}) for more details.
        \item The authors should provide instructions on data access and preparation, including how to access the raw data, preprocessed data, intermediate data, and generated data, etc.
        \item The authors should provide scripts to reproduce all experimental results for the new proposed method and baselines. If only a subset of experiments are reproducible, they should state which ones are omitted from the script and why.
        \item At submission time, to preserve anonymity, the authors should release anonymized versions (if applicable).
        \item Providing as much information as possible in supplemental material (appended to the paper) is recommended, but including URLs to data and code is permitted.
    \end{itemize}

\item {\bf Experimental setting/details}
    \item[] Question: Does the paper specify all the training and test details (e.g., data splits, hyperparameters, how they were chosen, type of optimizer) necessary to understand the results?
    \item[] Answer: \answerYes{} 
    \item[] Justification: \cref{section:exp_details} specifies the base models, training/evaluation protocol, datasets, metrics, context length, and compression ratio. The appendix \cref{appendix:training_details} provides dataset statistics, implementation details, and per-method hyperparameter tables for training. Together, these details are sufficient to understand and interpret the reported results.
    \item[] Guidelines:
    \begin{itemize}
        \item The answer \answerNA{} means that the paper does not include experiments.
        \item The experimental setting should be presented in the core of the paper to a level of detail that is necessary to appreciate the results and make sense of them.
        \item The full details can be provided either with the code, in appendix, or as supplemental material.
    \end{itemize}

\item {\bf Experiment statistical significance}
    \item[] Question: Does the paper report error bars suitably and correctly defined or other appropriate information about the statistical significance of the experiments?
    \item[] Answer: \answerNo{} 
    \item[] Justification: The reported QA metrics are aggregate EM/F1 scores computed over evaluation splits containing many validation examples, and the latency analysis in the appendix averages 200 timed iterations and reports mean and standard deviation. However, the main experimental results are still presented without run-to-run error bars, confidence intervals, or formal significance tests, so we answer this item conservatively.
    \item[] Guidelines:
    \begin{itemize}
        \item The answer \answerNA{} means that the paper does not include experiments.
        \item The authors should answer \answerYes{} if the results are accompanied by error bars, confidence intervals, or statistical significance tests, at least for the experiments that support the main claims of the paper.
        \item The factors of variability that the error bars are capturing should be clearly stated (for example, train/test split, initialization, random drawing of some parameter, or overall run with given experimental conditions).
        \item The method for calculating the error bars should be explained (closed form formula, call to a library function, bootstrap, etc.)
        \item The assumptions made should be given (e.g., Normally distributed errors).
        \item It should be clear whether the error bar is the standard deviation or the standard error of the mean.
        \item It is OK to report 1-sigma error bars, but one should state it. The authors should preferably report a 2-sigma error bar than state that they have a 96\% CI, if the hypothesis of Normality of errors is not verified.
        \item For asymmetric distributions, the authors should be careful not to show in tables or figures symmetric error bars that would yield results that are out of range (e.g., negative error rates).
        \item If error bars are reported in tables or plots, the authors should explain in the text how they were calculated and reference the corresponding figures or tables in the text.
    \end{itemize}

\item {\bf Experiments compute resources}
    \item[] Question: For each experiment, does the paper provide sufficient information on the computer resources (type of compute workers, memory, time of execution) needed to reproduce the experiments?
    \item[] Answer: \answerYes{} 
    \item[] Justification: We provide the key compute details needed to reproduce the reported experiments in the paper: training is described with device information (e.g., 4$\times$NVIDIA A100), precision and optimization settings in \cref{section:exp_details,appendix:training_details}. The latency study specifies the hardware, memory, workload configuration, warm-up runs, and timed iterations (NVIDIA L40S 48GB GPU; 50 warm-up and 200 timed iterations) in \cref{appendix:exp-latency}. Together, these disclosures provide sufficient information about the compute workers and execution setup for reproduction.
    \item[] Guidelines:
    \begin{itemize}
        \item The answer \answerNA{} means that the paper does not include experiments.
        \item The paper should indicate the type of compute workers CPU or GPU, internal cluster, or cloud provider, including relevant memory and storage.
        \item The paper should provide the amount of compute required for each of the individual experimental runs as well as estimate the total compute. 
        \item The paper should disclose whether the full research project required more compute than the experiments reported in the paper (e.g., preliminary or failed experiments that didn't make it into the paper). 
    \end{itemize}
    
\item {\bf Code of ethics}
    \item[] Question: Does the research conducted in the paper conform, in every respect, with the NeurIPS Code of Ethics \url{https://neurips.cc/public/EthicsGuidelines}?
    \item[] Answer: \answerYes{} 
    \item[] Justification: The work complies with the NeurIPS Code of Ethics, with particular attention to data integrity, transparency, and reproducibility. Both the manuscript and the supplementary material were prepared in accordance with these principles and remain anonymous until the camera-ready stage.
    \item[] Guidelines:
    \begin{itemize}
        \item The answer \answerNA{} means that the authors have not reviewed the NeurIPS Code of Ethics.
        \item If the authors answer \answerNo, they should explain the special circumstances that require a deviation from the Code of Ethics.
        \item The authors should make sure to preserve anonymity (e.g., if there is a special consideration due to laws or regulations in their jurisdiction).
    \end{itemize}

\item {\bf Broader impacts}
    \item[] Question: Does the paper discuss both potential positive societal impacts and negative societal impacts of the work performed?
    \item[] Answer: \answerYes{} 
    \item[] Justification: The appendix discusses negative impacts and risks such as omission of critical details, bias, privacy, and misuse, and the \cref{appendix:broader-impact} discusses positive impacts such as lower memory, latency, and energy costs.
    \item[] Guidelines:
    \begin{itemize}
        \item The answer \answerNA{} means that there is no societal impact of the work performed.
        \item If the authors answer \answerNA{} or \answerNo, they should explain why their work has no societal impact or why the paper does not address societal impact.
        \item Examples of negative societal impacts include potential malicious or unintended uses (e.g., disinformation, generating fake profiles, surveillance), fairness considerations (e.g., deployment of technologies that could make decisions that unfairly impact specific groups), privacy considerations, and security considerations.
        \item The conference expects that many papers will be foundational research and not tied to particular applications, let alone deployments. However, if there is a direct path to any negative applications, the authors should point it out. For example, it is legitimate to point out that an improvement in the quality of generative models could be used to generate Deepfakes for disinformation. On the other hand, it is not needed to point out that a generic algorithm for optimizing neural networks could enable people to train models that generate Deepfakes faster.
        \item The authors should consider possible harms that could arise when the technology is being used as intended and functioning correctly, harms that could arise when the technology is being used as intended but gives incorrect results, and harms following from (intentional or unintentional) misuse of the technology.
        \item If there are negative societal impacts, the authors could also discuss possible mitigation strategies (e.g., gated release of models, providing defenses in addition to attacks, mechanisms for monitoring misuse, mechanisms to monitor how a system learns from feedback over time, improving the efficiency and accessibility of ML).
    \end{itemize}
    
\item {\bf Safeguards}
    \item[] Question: Does the paper describe safeguards that have been put in place for responsible release of data or models that have a high risk for misuse (e.g., pre-trained language models, image generators, or scraped datasets)?
    \item[] Answer: \answerNA{} 
    \item[] Justification: The submission does not release a high-risk generative model or scraped dataset, so controlled-release safeguards are not a central issue for this work.
    \item[] Guidelines:
    \begin{itemize}
        \item The answer \answerNA{} means that the paper poses no such risks.
        \item Released models that have a high risk for misuse or dual-use should be released with necessary safeguards to allow for controlled use of the model, for example by requiring that users adhere to usage guidelines or restrictions to access the model or implementing safety filters. 
        \item Datasets that have been scraped from the Internet could pose safety risks. The authors should describe how they avoided releasing unsafe images.
        \item We recognize that providing effective safeguards is challenging, and many papers do not require this, but we encourage authors to take this into account and make a best faith effort.
    \end{itemize}

\item {\bf Licenses for existing assets}
    \item[] Question: Are the creators or original owners of assets (e.g., code, data, models), used in the paper, properly credited and are the license and terms of use explicitly mentioned and properly respected?
    \item[] Answer: \answerYes{} 
    \item[] Justification: All assets used in this paper, including code, datasets, and pre-trained models, are appropriately attributed to their original creators or owners.
    \item[] Guidelines:
    \begin{itemize}
        \item The answer \answerNA{} means that the paper does not use existing assets.
        \item The authors should cite the original paper that produced the code package or dataset.
        \item The authors should state which version of the asset is used and, if possible, include a URL.
        \item The name of the license (e.g., CC-BY 4.0) should be included for each asset.
        \item For scraped data from a particular source (e.g., website), the copyright and terms of service of that source should be provided.
        \item If assets are released, the license, copyright information, and terms of use in the package should be provided. For popular datasets, \url{paperswithcode.com/datasets} has curated licenses for some datasets. Their licensing guide can help determine the license of a dataset.
        \item For existing datasets that are re-packaged, both the original license and the license of the derived asset (if it has changed) should be provided.
        \item If this information is not available online, the authors are encouraged to reach out to the asset's creators.
    \end{itemize}

\item {\bf New assets}
    \item[] Question: Are new assets introduced in the paper well documented and is the documentation provided alongside the assets?
    \item[] Answer: \answerNA{} 
    \item[] Justification: The submission does not release a new dataset, model, or code package alongside the paper; it only states that code will be released upon acceptance.
    \item[] Guidelines:
    \begin{itemize}
        \item The answer \answerNA{} means that the paper does not release new assets.
        \item Researchers should communicate the details of the dataset\slash code\slash model as part of their submissions via structured templates. This includes details about training, license, limitations, etc. 
        \item The paper should discuss whether and how consent was obtained from people whose asset is used.
        \item At submission time, remember to anonymize your assets (if applicable). You can either create an anonymized URL or include an anonymized zip file.
    \end{itemize}

\item {\bf Crowdsourcing and research with human subjects}
    \item[] Question: For crowdsourcing experiments and research with human subjects, does the paper include the full text of instructions given to participants and screenshots, if applicable, as well as details about compensation (if any)? 
    \item[] Answer: \answerNA{} 
    \item[] Justification: The work does not involve crowdsourcing or research with human participants.
    \item[] Guidelines:
    \begin{itemize}
        \item The answer \answerNA{} means that the paper does not involve crowdsourcing nor research with human subjects.
        \item Including this information in the supplemental material is fine, but if the main contribution of the paper involves human subjects, then as much detail as possible should be included in the main paper. 
        \item According to the NeurIPS Code of Ethics, workers involved in data collection, curation, or other labor should be paid at least the minimum wage in the country of the data collector. 
    \end{itemize}

\item {\bf Institutional review board (IRB) approvals or equivalent for research with human subjects}
    \item[] Question: Does the paper describe potential risks incurred by study participants, whether such risks were disclosed to the subjects, and whether Institutional Review Board (IRB) approvals (or an equivalent approval/review based on the requirements of your country or institution) were obtained?
    \item[] Answer: \answerNA{} 
    \item[] Justification: The work does not involve crowdsourcing or research with human participants, so IRB-style review is not applicable.
    \item[] Guidelines:
    \begin{itemize}
        \item The answer \answerNA{} means that the paper does not involve crowdsourcing nor research with human subjects.
        \item Depending on the country in which research is conducted, IRB approval (or equivalent) may be required for any human subjects research. If you obtained IRB approval, you should clearly state this in the paper. 
        \item We recognize that the procedures for this may vary significantly between institutions and locations, and we expect authors to adhere to the NeurIPS Code of Ethics and the guidelines for their institution. 
        \item For initial submissions, do not include any information that would break anonymity (if applicable), such as the institution conducting the review.
    \end{itemize}

\item {\bf Declaration of LLM usage}
    \item[] Question: Does the paper describe the usage of LLMs if it is an important, original, or non-standard component of the core methods in this research? Note that if the LLM is used only for writing, editing, or formatting purposes and does \emph{not} impact the core methodology, scientific rigor, or originality of the research, declaration is not required.
    \item[] Answer: \answerNA{} 
    \item[] Justification: The core method development in this research does not involve LLMs as any important, original, or non-standard components.
    \item[] Guidelines:
    \begin{itemize}
        \item The answer \answerNA{} means that the core method development in this research does not involve LLMs as any important, original, or non-standard components.
        \item Please refer to our LLM policy in the NeurIPS handbook for what should or should not be described.
    \end{itemize}

\end{enumerate}

\end{document}